\newtheorem{lemma}{Lemma}
\newtheorem{theorem}{Theorem}
\theoremstyle{definition}
\newtheorem{remark}{Remark}
\title{Bandits with many optimal arms}
\author{%
  Rianne de Heide \\ 
  INRIA Lille and CWI Amsterdam\\
  \texttt{r.de.heide@cwi.nl} \\
   \And
   James Cheshire \\
   Otto von Guericke University Magdeburg \\
   \texttt{james.cheshire@ovgu.de} \\
   \And
   Pierre M\'enard \\
   Otto von Guericke University Magdeburg \\
   \texttt{pierre.menard@ovgu.de} \\
   \And
   Alexandra Carpentier \\
   University of Potsdam \\
   \texttt{carpentier@uni-potsdam.de} \\
}
\begin{document}

\maketitle

\begin{abstract}
We consider a stochastic bandit problem with a possibly infinite number of arms. We write $\pstar$ for the proportion of optimal arms and $\Delta$ for the minimal mean-gap between optimal and sub-optimal arms. We characterize the optimal learning rates both in the cumulative regret setting, and in the best-arm identification setting in terms of the problem parameters $T$ (the budget), $\pstar$ and $\Delta$. For the objective of minimizing the cumulative regret, we provide a lower bound of order $\Omega(\log(T)/(\pstar\Delta))$ and a UCB-style algorithm with matching upper bound up to a factor of $\log(1/\Delta)$. Our algorithm needs $\pstar$ to calibrate its parameters, and we prove that this knowledge is necessary, since adapting to $\pstar$ in this setting is impossible. For best-arm identification we also provide a lower bound of order $\Omega(\exp(-cT\Delta^2\pstar))$ on the probability of outputting a sub-optimal arm where $c>0$ is an absolute constant. We also provide an elimination algorithm with an upper bound matching the lower bound up to a factor of order $\log(T)$ in the exponential, and that does not need $\pstar$ or $\Delta$ as parameter. Our results apply directly to the three related problems of competing against the $j$-th best arm, identifying an $\epsilon$ good arm, and finding an arm with mean larger than a quantile of a known order.
\end{abstract}


\section{Introduction}
\label{sec:intro}

In the classical stochastic multi-armed bandit model -- see \citep{lattimore2020bandit} for a recent survey -- a learner interacts with an environment in several rounds. At each round, the learner chooses an \emph{arm} to play, and receives a random reward from the associated probability distribution. Popular settings are respectively the fixed budget \emph{cumulative regret setting}~\citep{robbins1952some}, and \emph{best-arm identification setting}~\citep{even2002pac, bubeck2009pure, audibert2010best}. In the first setting, the learner is interested in maximizing the sum of rewards gathered -- or minimizing the cumulative regret -- and in the best-arm identification setting, the learner is asked at the end of the game to output a guess for the arm with the largest mean reward, and is interested in the quality of this guess -- typically measured by the probability of error in the guess. 

In most of the papers that concern this topic, it is assumed (i) that there is a single optimal arm, i.e.~arm with highest mean, and (ii) that the number of arms is bounded and small when compared to the time horizon, i.e.~the number of rounds where the player is allowed to choose an arm. However in many realistic applications, it is not the case, for example in image classification, mining of resources, personalized medicine, or hyperparameter tuning (see \citep{berry1997bandit} for more examples). And while it is clear that in all generality, the task of the learner becomes unsolvable if the number of arms is too large, it intuitively makes sense that if the proportion of optimal arms is also large, this should help the learner.

In this paper, we lift both assumptions summarised in (i) and (ii) and study both the cumulative regret and best-arm identification setting. See Section~\ref{sec:intro-related-work} for literature related to this that we will discuss later. We will focus on the \textit{problem dependent setting} and will aim at characterising optimal learning rates depending on the proportion of optimal arms, and on the minimal gap between the mean of an optimal arm and the mean of a sub-optimal arm.

\subsection{Setting}

We consider a setting with a (potentially infinite) set of arms $\cA$, which we call the \emph{reservoir}. Each arm $a \in \cA$ is associated with a probability distribution $\nu_a$, which we assume to be supported on $[0,1]$, and we denote its mean by $\mu_a$. Write $\mu^* = \max_{a\in\cA} \mu_a$ for the highest mean\footnote{We assume that it is attained for some arm(s).}, $\mu_{sub} = \sup_{a\in \cA: \mu_a \neq \mu^*} \mu_a$ for the second highest mean, and $\Delta = \mu^* - \mu_{sub}$ for the associated minimal gap. We will focus throughout this paper on the case where $\Delta >0$.

We further assume that there exists a partition 
$\cA = \cA^* \cup \cA_{sub}$ 
such that each arm $a\in \cA^*$ is optimal, i.e.~$\mu_a = \mu^*$, and each arm $a\in \cA_{sub}$ is sub-optimal, i.e.~$\mu_a \leq \mu_{sub}$. 
We assume that the agent can pick arms uniformly at random from the reservoir $\cA$\footnote{In case of infinite $\cA$, one can obviously not sample from a uniform distribution. Our analysis extends to general distributions on $\cA$.}, and this arm belongs either to the set $\cA^*$ with probability $\pstar$, i.e.~there is a proportion $\pstar$ of optimal arms in the reservoir; or it belongs to the set $\cA_{sub}$ with probability $1 - \pstar$, i.e.~there is a proportion $1-\pstar$ of sub-optimal arms in the reservoir.


The learner interacts with the environment in several rounds $t = 1, 2, \ldots, T$, where we fix the time horizon $T$. At each round $t \leq T$, the learner chooses an arm $a_t$ by either picking a new arm from the reservoir $\cA$ or playing a past arm, and gets a reward $Y_t \sim \nu_{a(t)}$. The arm choice depends only on the past observations, the past arm choices, and possibly some exogenous randomness. The rewards for each arm $a$ are i.i.d.\ random variables with mean $\mu_a$ unknown to the learner.

\paragraph{Cumulative regret setting.} The first setting we study is that of minimizing the \emph{cumulative regret}. This setting enforces the \emph{exploration-exploitation trade-off}: the learner needs to balance exploratory actions to get a better estimate of the reward distributions, and exploitative actions to maximize the total return -- and minimise the associated cumulative regret. The cumulative regret is the difference between the sum of expected rewards the learner would have obtained by only choosing the arm with the highest mean reward, and the sum of expected rewards she actually collected:
\[
R(T)  = \sum_{t=1}^T \mustar - \mu_{a(t)}\,.
\]

\paragraph{Best-arm identification setting} In the second setting we study, we are interested in identifying an arm with the highest mean reward. 
At the end of $T$ rounds, the agents selects an arm $\ha_T$ and aims at minimising the probability of outputting an arm with sub-optimal mean:
\[
\error(T) = \PP{\ha_T \notin \mathcal A^*}.
\]
A closely related popular measure of error is the~\emph{simple regret}, which is not discussed in this paper.

\paragraph{Equivalent settings} Firstly, our setting is directly applicable to the problem of competing against the $j$-th best arm, where we assume w.l.o.g.\ the arms to be ordered according to their means. Indeed our setting translates to this if we replace $\pstar$ by $j/K$ and $\Delta$ by the gap between the $j/2$-th and the $j+1$-th best arm, i.e.\ $\Delta = |\mu_{j/2} - \mu_{j+1}|$. Secondly, our setting is directly applicable to that of identifying an $\epsilon$ good arm, and thirdly, our setting is directly applicable to finding any arm in the reservoir with a mean larger than the quantile of a known order  -- see the discussion in Section~\ref{sec:intro-related-work}.

\subsection{Contributions}
We characterise the optimal learning rates both for the cumulative regret setting, and for best-arm identification, for our problem described above. We characterise the optimal learning rates in terms of the problem parameters $T, \pstar,$ and $\Delta$.

In order to describe our results, let us write for $\bar \Delta>0$, $\bar \pstar \in [0,1)$: $\mathfrak B_{\bar \Delta, \bar \pstar}$, for the set of bandit problems whose reservoir distribution is such that $\pstar \geq \bar \pstar$ and such that $|\bar \mu^*-  \mu_{sub}| \geq \bar \Delta$. 

\paragraph{Cumulative regret} We provide an algorithm, \emph{that takes $\pstar$ as a parameter}, that is such that (see Theorem~\ref{thm:UBcumulative}) 
$$\mathbb E R(T) \leq O\left(\frac{\log T \log(1/\Delta)}{\pstar \Delta}\right).$$
Conversely, we prove in Theorem~\ref{thm:LBcumulative2} that for $\bar \pstar \leq 1/4$ and $\bar \Delta \leq 1/4$, and for any algorithm, there exists a problem in $\mathfrak B_{\bar \Delta, \bar \pstar}$ such that 
$$\mathbb E R(T) \geq \Omega \left( \frac{\log T}{\bar \pstar \bar\Delta}\right).$$
These two bounds match up to a multiplicative factor of order $\log(1/\Delta)$. They highlight the intuitive fact that we should pay the number of arms in the rate only relative to the number of optimal arms -- i.e.~only through $\pstar$. Indeed, the probability of picking an optimal arm in the reservoir when sampling uniformly at random being $\pstar$, if we sample about $1/\pstar$ arms at random from the reservoir, we will have sampled one optimal arm with constant probability -- so that $1/\pstar$ plays the same role as the number of arms.

Having said that, there is a main conceptual difficulty in order to get a rate that is tight in terms of its dependence in $T$. If we sample only $1/\pstar$ arms from the reservoir, the probability of having no optimal arms in the chosen set of arms is also a constant -- so that the regret is linear in $T$. It is therefore essential to sample \emph{more} arms. In order to have a logarithmic regret in $T$, we need to sample at least about $\log T/\pstar$ arms from the reservoir -- in which case at least one of them is optimal with probability polynomially decaying with $T$. But if we do this, we get a regret of order $\frac{(\log T)^2}{\pstar \Delta}$, as there are about $\log T/\pstar$ sub-optimal arms whenever $\pstar$ is not too close to $1$. This is much larger than the bound that we have, where the dependence on $T$ is only $\log T$. In order to achieve this bound, we need to take into account the fact that when sampling $\log T/\pstar$ arms from the reservoir, there is typically not just $1$, but $\log T$ optimal arms with high probability -- and leverage this fact both in our algorithm and in the associated proof. We describe this in more detail in Section~\ref{sec:cumulative_regre_ub}. 


\paragraph{Best-arm identification} We provide an algorithm \emph{that does not take $\pstar$ as a parameter}, such that,
$$\error(T) \leq O\left( \log(T)\exp\left(-c  \frac{T\Delta^2\pstar}{ \log(T)}\right) \right),$$
where $c$ is some universal constant. 
Conversely, we prove that for $\pstar \leq 1/4$ and $\Delta \leq 1/4$, and for any algorithm, there exists a problem in $\mathfrak B_{\bar \Delta, \bar \pstar}$ such that $\error(T) \geq \Omega \left( \exp\left(- c T \Delta^2 \pstar\right) \right)$,
where $c>0$ is some universal constant. These two bounds match in order up to a factor of order $\log(T)$ in the exponential, it is an open question here whether this term is necessary or not.

These bounds highlight the intuitive fact that we should pay the number of arms in the rate only relative to the number of optimal arms -- i.e.~only through $\pstar$. As in the cumulative regret setting, if we sample about $1/\pstar$ arms at random from the reservoir, we will have sampled one optimal arm with constant probability -- so that $1/\pstar$ plays the same role as the number of arms.

As in the cumulative regret setting, there is again a main conceptual difficulty in order to get a rate that is tight in terms of its dependence in $T$. If we sample only $1/\pstar$ arms from the reservoir, the probability of having no optimal arms in the chosen arms is also a constant -- which is way smaller than the targeted best-arm identification probability. In order to have at least one optimal arm in the set of arms picked from the reservoir with a probability that decays exponentially with $\pstar T\Delta^2$, the number of arms that have to be sampled should be larger than $T\Delta^2$. But if we do this, 
we get an upper bound on the probability of error that is of constant order -- which is much larger than the bound that we have. In order to obtain our upper bound, we need to take into account the fact that when sampling $T\Delta^2$ arms from the reservoir, there is typically not just $1$, but $\pstar T\Delta^2$ optimal arms with high probability -- and leverage this fact both in our algorithm and in the associated proof. We describe this in more detail in Section~\ref{sec:simple_regret_ub}.

\paragraph{Adaptation to $\pstar$: diverging pictures for cumulative regret and best-arm identification} The algorithm for cumulative regret takes (a lower bound on) $\pstar$ as parameter, but the algorithm for best-arm identification does not take anything related to $\pstar$ or $\Delta$ as a parameter. And so, while our algorithm for best-arm identification is adaptive to $\pstar$ and $\Delta$, our cumulative regret algorithm is adaptive to $\Delta$ but not $\pstar$. In Section~\ref{sec:cumulativeregret_impossibilityOfAdaptation} we prove that it is not just a weakness of our analysis, but that it is \emph{impossible to adapt to $\pstar$ when it comes to the cumulative regret}. The phenomenon of adaptation to the problem hyper-parameters being possible for best-arm identification but not for cumulative regret, was observed earlier: In the $\mathcal X$-armed bandit setting  \cite{locatelli2018adaptivity} show it is impossible to adapt to smoothness and \cite{hadiji2019polynomial} further classifies the cost of adaptation in this case. \cite{zhu2020regret} explore the cost of adaptation to $\pstar$ for the problem independent case where the number of arms is large.

\subsection{Related work}\label{sec:intro-related-work}

\paragraph{Finite and small number of arms.}
The regret-minimization setting, introduced by \citep{robbins1952some}, has been well-studied for \emph{finite}-armed bandit models. Algorithms for this problem fall into several categories: algorithms based on upper-confidence bounds (UCB) for the unknown arm means \citep{katehakis1995sequential,auer2002finite, auer2010ucb, cappe2013kullback}, algorithms that exploit a posterior distribution on the means, such as Thompson Sampling \citep{thompson1933likelihood, kaufmann2012thompson}, and many more such as explore-then-commit \citep{garivier2016explore}  and phased-elimination \citep{even2006action}. Logarithmic instance-dependent lower bounds have already been obtained in the seminal paper by \citep{lai1985asymptotically}, and were generalized later, e.g.\ by \citep{burnetas1996optimal}, see \citep{garivier2019explore} for an overview and simple proofs. In the setting where the number of arms $|\cA|$ is finite and not too large -- much smaller than $T$ -- a classical problem dependent upper bound on the expected cumulative regret is\footnote{\label{note1} In the case where $\cA$ is finite otherwise the quantity below is infinite.}\addtocounter{footnote}{-1}\addtocounter{Hfootnote}{-1} 
\begin{equation}\label{eq:cumfinite}
    \sum_{a \in \cA \setminus \cA^*} \left(\frac{8\log T}{\mu^* - \mu_a}+2 \right)\leq |\cA_{sub}| \frac{\log T}{\Delta}+2 |\cA_{sub}|.
\end{equation}
The bound in the RHS is tight if all sub-optimal arms have the same gap $\Delta$. Moreover, this regret bound asymptotically matches the lower bound by \citep{burnetas1996optimal} up to a multiplicative constant. 
In the case where there are infinitely many sub-optimal arms, on the other hand, this upper bound is infinite, \textit{even when the proportion of optimal arms $\pstar$ is large and where one would hope for better performances}. 


The fixed-budget best-arm identification setting was introduced by \citep{bubeck2009pure, audibert2010best} and has been widely studied. It is well-known that algorithms that are optimal for cumulative-regret minimization cannot yield optimal performance for best-arm identification \citep{bubeck2011pure, kaufmann2017learning}. Write\footnotemark $H = \sum_{a\in \cA\setminus \cA^*} \frac{1}{(\mu^* - \mu_a)^2} \leq \frac{|\cA_{sub}|}{\Delta^2}.$
The bound in the RHS is tight if all sub-optimal arms have gap $\Delta$. It is proven by \citep{audibert2010best} that given $H$, there exists an algorithm such that the probability of misidentifying an optimal arm is of order $\exp\left(-cT/H\right)$,
where $c>0$ is some universal constant. 
In the case where there is \textit{a single optimal arm} this bound is provably optimal \citep{carpentier2016tight} when $H$ is known.
However, in the case where there are infinitely many sub-optimal arms this upper bound is larger than $1$ and thus vacuous, \textit{even when the proportion of optimal arms $\pstar$ is large and where one would hope for better performances}. 

Importantly, our results in both settings extend to finite bandits. Furthermore we do not need infinite $\cA$ for our results to be near optimal. In the finite setting with $K$ arms and $\pstar K$ optimal arms the problem is strictly harder than one with $\frac{1}{\pstar}$ arms and a single optimal arm. Indeed, the latter problem would correspond to one where the learner receives, as additional information, a partition of the set of $K$ arms in $\frac{1}{\pstar}$ groups, where one of the groups contains all optimal arms, and the others are only composed of sub-optimal arms. One can then see that we match the classical UB and LB for the finite bandit problem, up to $\log(1/\Delta)$ terms.



\paragraph{Large to infinite number of arms.}

The setting with an infinite number of arms -- and sometimes also many optimal arms -- has been studied in different settings.

A setting that is very related to ours is the infinitely many-armed setting where a distribution is assumed on the reservoir -- called the reservoir distribution. At each round, the learner can pull a previously queried arm, or a new arm that is sampled according to the reservoir distribution. A classical assumption on the reservoir is that the proportion of $\bar \Delta$-near optimal arms is of larger order than $\bar \Delta^{-\alpha}$ for any $\bar \Delta$. 
This setting been studied for both cumulative regret minimization \citep{berry1997bandit, wang2008algorithms, bonald2013two, david2014infinitely} and for best-arm identification \citep{carpentier2015simple, aziz2018pure, chaudhuri2017pac}. A classical strategy is to select a subset of arms from the reservoir, large enough so that it contains a near optimal arm with high probability, and to use classical bandit strategies on these arms. The minimax order of magnitude of the cumulative regret is then $\sqrt{T} \lor T^{\alpha/(\alpha+1)}$ and for the simple regret it is $T^{-1/2} \lor T^{-1/\alpha}$. 

Related results have also be obtained in the setting where the number of arms is finite, but large -- i.e.~$K > T$ -- and under related assumptions on the frequency of near-optimal arms~\citep{zhu2020regret}. 
While our setting is extremely related to this setting, the assumption about the frequency of near-optimal arms differs in the above literature from the assumption we make in this paper. Their bounds are not dependent upon $\Delta$ -- they assume $\forall k \in [K], \mu_k \in [0,1]$, and instead focus on achieving semi adaptivity in regards to an unknown $\alpha^*$, where $\alpha^* := \inf\{ \alpha : K/|S_{*}| < T^{\alpha} \}$. In the context of our setting $T^\alpha$ would act as a upper bound on $1/\pstar$. They propose an algorithm with user defined parameter $\beta$ that has no guarantees on regret for $\beta < \alpha$. And while our assumption is more restrictive, we also expect to obtain much smaller optimal rates. Our results differ from this stream of literature in the same way that, in the classical MAB, \textit{problem dependent results differ from problem independent results}. 


Another setting takes a regularity assumption on the reservoir distribution around $\mu^*$ -- that is, the proportion of arms in the reservoir whose gap is of order greater than $\bar \Delta$ is bounded above by a function of $\bar \Delta$, typically $\bar \Delta^{\alpha}$, where $\alpha$ is the regularity coefficient. For best-arm identification adaptivity is possible without knowledge of $\alpha$ and \cite{carpentier2015simple} provide algorithms for the simple regret with LB matching up to $\log(T)$ terms. In the case of cumulative regret \cite{wang2008algorithms} and \cite{bonald2013two} again provide near optimal results but in the case of \textit{known} $\alpha$. While the above literature considers a weaker assumption on the reservoir distribution, their results are also considerably weaker than our own. For best-arm identification they identify a sub optimal arm whose distance to the optimal arm is bounded polynomially with $T$. For cumulative regret the regret is bounded polynomially with $T$. These bounds are in both cases much larger than our bounds -- which essentially reflects that their assumption are weaker.

Closer to our setting are the works \citep{chaudhuri2017pac} and \citep{aziz2018pure}, where they try to find any arm in the reservoir with a mean larger than the quantile of a known order (with respect to the reservoir distribution) with high probability. This can be seen as the fixed confidence version of our setting for best-arm identification where the order of the quantiles is our known proportion of optimal arms $\pstar$ and the gap $\Delta$ is the difference between the first and the second quantile of order $\pstar$.  Precisely, \citep{aziz2018pure} provide an algorithm that can find an arm above the quantile of order $\pstar$ with probability at least $1-\delta$ in less than $H_{\Delta,\pstar} \log(1/\delta)^2$ samples on average, where $H_{\Delta,\pstar} \approx 1/(\pstar \Delta^2)$ is the problem dependent constant. The fixed confidence result of~\citep{aziz2018pure} translates, in the fixed budget setting, into an upper bound on the probability of error $\error(T)$ of order $\exp(-c \sqrt{T\pstar \Delta^2})$ where $c>0$ is some universal constant -- which is much larger than our bound for large $T$. Similarly, \citep{chaudhuri2018quantile} consider the regret with respect to a fixed quantile of order $\pstar$ of the distribution of the means in the reservoir which is again quite related to the regret in our setting. They obtain an algorithm with a bound on cumulative regret of order $R(T) \leq O\big(1/\pstar + \sqrt{(T/\pstar) \log(\pstar T)}\big)$, for any $\Delta>0$ -- in this sense, this analysis is problem independent.

Also closely related is the paper \citep{samuels2020complexity} which deals with identifying an $\epsilon$ good arm -- in the case where there are many such $\epsilon$ good arms, with high probability. Again this can be seen as a fixed confidence version of our setting, with the proportion of $\epsilon$ good arms being equivalent to our $\pstar$. However, the focus of their results differs considerably to our own. Specifically, in our setting, Theorem 2 of \citep{samuels2020complexity} provides an upper bound on the expectation of a stopping time for epsilon good arm identification, of the order $\bar{\mathcal{H}}\log(\bar{\mathcal{H}})$ where $\bar{\mathcal{H}} \approx 1/(\pstar\Delta^2)\log(1/\delta)$ but this bound does not hold in high probability, which would be necessary if one wished to directly compare their results to ours. Indeed for the stopping time of their algorithm to be bounded in high probability one would need to pay a $\log(1/\delta)^2$ term, corresponding to $\exp(-\sqrt{\Delta^2 p^* T})$ in our setting, see Remark 4 in \citep{samuels2020complexity} and page 15 in the appendix of the full version \citep{katzsamuels2019true}. The focus of \citep{samuels2020complexity} is instead to get more complete gap dependent bounds, considering also the gaps within the epsilon good arms but as mentioned their results cannot be applied directly to our setting and, as they point out, extending their approach to include high probability guarantees would be strictly sub optimal compared to our results.

We can also view the \emph{most-biased coin problem} studied by \citep{chandrasekaran2014finding} and \citep{jamieson2016power} as a particular instance of our setting where all optimal arms are distributed according to a Bernoulli distribution $\Ber(\mustar)$ and any sub-optimal arm is distributed according to the \emph{same} Bernoulli distribution $\Ber(\mu^-)$. The goal is then to identify an optimal arm with high probability with as few samples as possible. Precisely, \citep{jamieson2016power} prove that they can find an optimal arm with probability at
least $1-\delta$ with $\log\!\big(1/(\pstar \Delta^2)\big) \frac{\log(1/\delta)}{\pstar \Delta^2} $ samples in expectation when $\mustar,\mu^-$ and $\pstar$ are unknown to the agent and with $\frac{\log(1/\delta)}{\pstar \Delta^2}$ samples if $\pstar$ is known. It is also worth mentioning the problem of $\pstar$ estimation for the biased coin problem. For unknown $\pstar$ and $\Delta$, \citep{Lee2021uncertainty} describe, in the fixed confidence setting, the optimal learning rate for estimating $\pstar$, up to an additive error $\epsilon$, of the order $\frac{\pstar}{\epsilon^2 \Delta^2}\log(1/\delta)$. 

The translation of the result from~\citep{jamieson2016power} to the fixed budget setting is much closer to our result, as it would provide a bound of order $\exp\left(-c T\pstar \Delta^2/ \log(1/(\pstar\Delta^2))\right)$ where $c>0$ is some universal constant. This is very similar to our bound, but there is a main difference: we do not assume that there are just two possible distribution for the arms as \citep{jamieson2016power} -- the set $\cA_{sub}$ of sub-optimal arms might contain arms of diverse means, all being at a gap more than $\Delta$ from $\mu^*$. This makes the problem \textit{significantly more difficult} -- in particular regarding the adaptation to $\pstar$ -- since in our setting, it is impossible to estimate the minimal gap $\Delta$, see Section~\ref{sec:disscussion}. In fact, extending to a more general reservoir is an open question of interest left at the end of the above paper.

Otherwise, there are some other formulations of the infinitely-many armed bandit problem that are quite popular, but very different from our setting, and that we mention here for completeness. Many works are devoted to the setting where there is some topological relation between the index of the arms, and the mean of the arms~\citep{kleinberg2008multi, bubeck2011x, grill2015black}. This setting is often referred to as the $\mathcal X-$armed bandit setting, and not related to our work as we do not make such topological assumptions. Finally, a paper in which the setting is close to ours, but where the goal is very different, is the one by \citep{juneja2019sample}. The authors consider a partition of the (infinite) space $\Omega$ of K-armed bandit models $\nu = (\nu_1, \ldots, \nu_K)$, and want to identify for a given bandit model $\mu \in \Omega$ the correct partition component it belongs to. 

\paragraph{Fixed confidence to fixed budget setting}
In the fixed confidence setting for best-arm identification, given some $\delta > 0$, one aims to bound the expected number of samples one needs to correctly identify an optimal arm with probability greater than $1-\delta$. With our best-arm identification upper bound (Theorem \ref{thm:UBsimple}) in mind, we can essentially translate our result to the fixed confidence setting by considering $\delta=\exp\left(-\frac{T\pstar\Delta^2}{\log(1/\Delta)}\right)$,
and solving for $T$. This leads to a upper bound on the number of samples \OurAlgorithmSimple needs to be $\delta$-approximately correct of: $\frac{\log\left(\frac{1}{\delta}\right)\log\left(\frac{1}{\Delta}\right)}{\pstar\Delta^2}$. The papers \cite{jamieson2016power} and \cite{aziz2018pure} both deal with settings very related to our own but from the fixed confidence perspective. \cite{aziz2018pure} deals with quantile estimation and as highlighted above their results can be applied to our setting but with a significantly worse bound on probability of error of order $\exp(\sqrt{T\pstar \Delta})$. In \cite{jamieson2016power} the problem of best-arm identification is tackled directly but with strong restriction on the reservoir distribution, they consider the case were all sub optimal arms are identically distributed. 
\section{Cumulative regret}
\label{sec:cumulative_regret}
We first present an algorithm and prove an upper bound on its cumulative regret, and then we present a problem-dependent lower bound that shows we match the regret bound up to poly-log terms in $\Delta$. Lastly, we provide a theorem to the effect that adaptation to the proportion of optimal arms $\pstar$ is not possible in this setting.

\subsection{Upper bound}
\label{sec:cumulative_regre_ub}
We present \OurAlgorithm for cumulative regret minimization. This algorithm is an Upper Confidence Bound (UCB) type algorithm \citep{lattimore2020bandit}. We first sample a set $\cL$ of arms large enough such that with high probability (of order $1-1/T$) there is a proportion 
of order $\pstar$ optimal arms. Then we build an upper confidence bound on the empirical mean of each sampled arm, see~\eqref{eq:def_ucb},
where $\hmu_{a}^t$ is the empirical mean of arm $a$ at time $t$ and $N_{a}^t$ the number of times arm $a$ was pulled until time $t$. At time $t$ we pull the arm $a\in\cL$ with the highest upper confidence bound $U_{a}^t$. The complete procedure is detailed in Algorithm~\ref{alg:OurAlgorithmRegret}.
Notably, we do not tune the upper confidence bounds such that they are exceeded with probability less than $1/T$, as for finite-armed bandits. In that setting, a common choice is to have bonuses of the form $\hmu_{a}^t + \sqrt{2\log(T)/N_{a}^t}$, see \citep{lattimore2020bandit}. Instead we use an exploration function that does not depend on $T$, such that the upper confidence bounds are exceeded with probability smaller than a fixed constant, see~\eqref{eq:def_ucb}.
Thus we only pay a constant regret of order $\log(1/\Delta)$ on the set of sampled arms $\cL$. This is made possible by leveraging the fact that we know that there is a proportion of order $\pstar$ optimal arms.
\begin{algorithm}[ht]
\caption{Sampling UCB}
\label{alg:OurAlgorithmRegret}
\SetKwInput{KwData}{Input}
    \SetAlgoLined
   {\bfseries Input:} $\gamma \in (0,1)$, $L\geq 1$
    
    {\bfseries Initialize:}  Pick $\cL$, with $|\cL|= L$, arms from the reservoir $\cA$. Sample each arm once.

    \For{$t= L+1 $ to $T$}{
    Compute for each arm $a\in\cL$ the quantity
        \begin{equation}
\label{eq:def_ucb}
      U_{a}^t = \hmu_{a}^t + \sqrt{\frac{\gamma^2(1-\gamma)^{-1}/4+\log(\pi^2/6) +2 \log(N_{a}^t)}{2N_{a}^t}},
\end{equation}
        
        Play
        $
        a_t = \argmax_{a \in \cL} U_{a}^t.
        $

    }
\end{algorithm}

We prove the following regret bound for \OurAlgorithm in Appendix~\ref{app:regret_proofs}.

\begin{theorem}\label{thm:UBcumulative}
For $T\geq 2 $, $\gamma \in (0,1)$ and $L = \big\lceil 4\log(T)/(\pstar\gamma^2) \big\rceil$, the expected cumulative regret of \OurAlgorithm is upper bounded as follows:
  \[
    \E R(T) \leq O\left( \frac{\log(T) \log(1/\Delta)}{\pstar \Delta}\right)\,,
  \]
 see the end of the proof for a precise bound, i.e.~\eqref{eq:final_ub_regret}.
\end{theorem}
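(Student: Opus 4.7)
The plan is to decompose the expected regret according to two good events: (G1) the random sample $\cL$ contains many optimal arms, specifically $K^\star := |\cL \cap \cA^*| \geq 2\log(T)/\gamma^2$; and (G2) the UCB of at least one optimal arm in $\cL$ stays above $\mu^\star$ for the entire horizon. Conditional on $G_1 \cap G_2$, I will run a textbook UCB argument that, thanks to the horizon-free bonus in~\eqref{eq:def_ucb}, gives a per-suboptimal-arm pull count independent of $T$. The $T$-dependence then enters only through the sample size $L = \lceil 4\log(T)/(\pstar \gamma^2)\rceil$.

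For $G_1$, each arm of $\cL$ is independently in $\cA^*$ with probability $\pstar$, so $K^\star \sim \mathrm{Bin}(L,\pstar)$ with mean at least $4\log(T)/\gamma^2$; a multiplicative Chernoff bound gives $\PP{G_1^c}\leq 1/T$. For $G_2$, the exploration function in~\eqref{eq:def_ucb} is engineered (via a classical peeling / maximal inequality over geometric blocks of $N_a^t$) so that any fixed arm $a$ satisfies $U_a^t \geq \mu_a$ for all $t$ with probability at least $1 - q$, where $q := \exp(-\gamma^2/(4(1-\gamma)))$ is a constant strictly less than $1$. Since the reward streams of different arms are mutually independent, on $G_1$ the conditional probability that \emph{all} $K^\star$ optimal arms fail this concentration event is at most $q^{K^\star}\leq\exp\!\bigl(-2\log(T)\log(1/q)/\gamma^2\bigr)$, which is polynomially small in $T$ for any fixed $\gamma\in(0,1)$. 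Thus $\PP{G_1^c \cup G_2^c} = O(1/T)$.

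On $G_1\cap G_2$, fix an optimal $a^\star \in \cL$ whose UCB dominates $\mu^\star$ for all $t$. For any sub-optimal $a\in \cL$ selected at time $t$, the UCB rule yields $\widehat\mu_a^t + \mathrm{bonus}(N_a^t) = U_a^t \geq U_{a^\star}^t \geq \mu^\star$. Combining with the symmetric deviation event $\widehat\mu_a^t \leq \mu_a + \mathrm{bonus}(N_a^t)$ (obtained by the same peeling argument, which again contributes only constant failure probability and thus an $O(1)$ additive term to the regret after summing geometric tails) gives $\Delta_a \leq 2\,\mathrm{bonus}(N_a^t)$. Inverting the horizon-free bonus $\sqrt{(c_0+2\log N)/(2N)}$ with $c_0 = \gamma^2/(4(1-\gamma))+\log(\pi^2/6)$ yields $N_a^T \leq O\!\bigl(\log(1/\Delta_a)/\Delta_a^2\bigr)$, so the regret from arm $a$ is $\Delta_a N_a^T \leq O(\log(1/\Delta)/\Delta)$, uniformly since $\Delta_a \geq \Delta$. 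Summing over at most $L$ sub-optimal arms in $\cL$ gives $O(L\log(1/\Delta)/\Delta) = O(\log(T)\log(1/\Delta)/(\pstar\Delta))$ on $G_1\cap G_2$; the complement contributes $O(1)$ since its probability is $O(1/T)$ and the per-round regret is at most $1$, and the initialisation cost $L$ is already absorbed.

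The main obstacle is step $G_2$: the constants in the bonus must be large enough that the concentration failure probability $q$ is bounded away from $1$ while the product $\log(1/q)\cdot K^\star$ still exceeds $\log T$; this dictates the precise calibration of $L$ against $\gamma$ and, crucially, requires the algorithm to know (a lower bound on) $\pstar$ in order to set $L$. A minor but important technical point is proving the sub-Gaussian maximal inequality underlying the peeling in a form sharp enough to produce the $\log(1/\Delta)$-only loss (rather than a $\log(N/\Delta)$ loss), which is what keeps the final $T$-dependence at the optimal $\log T$ instead of $(\log T)^2$.
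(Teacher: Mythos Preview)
Your approach is correct and matches the paper's: a Chernoff bound guarantees $\Lstar \gtrsim \log T$ optimal arms in $\cL$, independence across those arms makes the probability that \emph{all} their UCBs undershoot $\mu^\star$ at most $q^{\Lstar}$ with $q=e^{-\gamma^2/(4(1-\gamma))}$, and inversion of the horizon-free bonus yields the per-arm pull cap $n_0=O(\log(1/\Delta)/\Delta^2)$, so summing over the $L=O(\log T/\pstar)$ arms gives the claim. The paper differs only cosmetically: it bounds $\P(\forall b\in\cLstar,\ U_b^{t-1}\le\mu^\star\mid\cL)\le e^{-\gamma^2\Lstar/(4(1-\gamma))}$ separately at each round and sums over $t$ (contributing $\le 1$ to every $\E[N_a^T\mid\cL]$) rather than via your single global event $G_2$, and it handles the sub-optimal overshoot by the direct tail sum $\sum_{n\ge 1}e^{-n\Delta^2/2}\le 2/\Delta^2$ rather than the $\sum_n 1/n^2$ peeling you invoke---both routes give the same final bound.
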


Note that this bound matches the lower bound of Theorem~\ref{thm:LBcumulative2} of Section~\ref{sec:cumulative_regret_lb}, for $T$ large enough and up to a $\log(1/\Delta)$ multiplicative factor. Also, $L$ can be calibrated with a lower bound on $\pstar$ instead of $\pstar$, but this lower bound will appear in the rate instead of $\pstar$.

\begin{remark} \label{rem:arms}
Algorithm~\OurAlgorithm samples $L$ arms uniformly at random from the reservoir. What we mean by this is that each arm is pulled at random from $\cA$ \textit{independently from the other pulled arms}. In other words, by doing this, we potentially artificially create several independent copies of the same arm -- which might seem counter-intuitive, but is formally not a problem.\\
What this anyway implies is that the case $|\cA| \leq L$ is not a problem -- with this idea of independent copies, we can pull more arms from the reservoir than the number $|\cA|$ of arms.
\end{remark} 

 \begin{remark}
 Our algorithm is reminiscent of that of \cite{jamieson2014lil}, which, as our own, uses a UCB which does not depend on the time horizon, but only on the number of times an arm has been pulled. However, they do so for different reasons, namely to adapt to the infinite time horizon of the fixed confidence setting.
 \end{remark}

\subsection{Lower bound}
\label{sec:cumulative_regret_lb}
We can prove an equivalent of the \citep{lai1985asymptotically} lower bound for finite-armed bandits for our setting. The following theorem is proved in Appendix~\ref{app:regret_proofs}.
\begin{theorem}\label{thm:LBcumulative2}
Consider $\Delta \in (0,1/4)$ and $\pstar \in (0,1/4]$.
For any bandit algorithm, there exists a bandit problem in $\mathfrak B_{\Delta,\pstar}$ such that
$$\mathbb E R(T) \geq \min\left( \frac{1}{60} \frac{\max\left\lbrace\log(\Delta^2 T/16),0\right\rbrace}{\pstar \Delta}, \sqrt{T} \right)$$
\end{theorem}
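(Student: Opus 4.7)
The plan is to prove the lower bound via a Bretagnolle-Huber change-of-measure argument, adapting the Lai-Robbins blueprint for $K$-armed bandits to the reservoir setting. I would take as hard instance $\nu$ a Bernoulli reservoir with two types of arms: a proportion $\pstar$ of optimal arms of mean $\mustar = 1/2 + \Delta/2$, and a proportion $1-\pstar$ of sub-optimal arms of mean $\mustar - \Delta$. This instance lies in $\mathfrak{B}_{\Delta,\pstar}$, and $\mathbb{E}_\nu[R(T)] = \Delta \cdot \mathbb{E}_\nu[N_{\mathrm{sub}}^T]$ where $N_{\mathrm{sub}}^T$ counts pulls on sub-optimal arms, so the goal reduces to lower-bounding this count.

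For every sub-optimal arm $a$ that the algorithm samples, I would change measure to an alternative $\nu'_a$ identical to $\nu$ except that $a$ is promoted to $\Ber(\mustar + \epsilon)$, making it the unique best arm by a margin $\epsilon \in (0,\Delta)$. The divergence-decomposition lemma gives
\[
\mathrm{KL}\!\big(\mathbb{P}_{\nu}^T,\, \mathbb{P}_{\nu'_a}^T\big) \;\le\; \mathbb{E}_\nu[N_a^T] \cdot \mathrm{KL}\!\big(\Ber(\mustar - \Delta),\, \Ber(\mustar + \epsilon)\big) \;\le\; C\,\Delta^2\, \mathbb{E}_\nu[N_a^T],
\]
and combined with Bretagnolle-Huber applied to the event $\{N_a^T \ge T/2\}$ this yields the dichotomy $\mathbb{E}_\nu[N_a^T] \gtrsim \log(T\Delta^2)/\Delta^2$, unless $\nu'_a$ would suffer $\Omega(T\epsilon)$ regret -- which cannot hold simultaneously for many arms, since the agent has only $T$ pulls available in total.

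To recover the $1/\pstar$ factor I would case-split on $M_T$, the number of distinct arms the algorithm samples from the reservoir by time $T$. If $\mathbb{E}[M_T] \le 1/(4\pstar)$, then with constant probability no sampled arm is optimal (since $(1-\pstar)^{1/(2\pstar)}$ is bounded below by an absolute constant for $\pstar \le 1/4$), so on that event every pull incurs regret $\Delta$ and $\mathbb{E}_\nu[R(T)] \ge c\,T\Delta$; this regime accounts for the $\sqrt{T}$ floor via a standard Le Cam estimate with $\epsilon = \Theta(1/\sqrt{T})$ for small $T$. Otherwise $\mathbb{E}[M_T] \ge 1/(4\pstar)$, and since each fresh draw is independently sub-optimal with probability $1-\pstar \ge 3/4$, the expected number of sampled sub-optimal arms is $\gtrsim 1/\pstar$; summing the per-arm lower bound $\Omega(\log(T\Delta^2)/\Delta^2)$ over them and multiplying by $\Delta$ yields the target $\Omega(\log(T\Delta^2)/(\pstar\Delta))$ bound.

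The main technical obstacle is the interaction between the adaptive decision of when to draw a fresh arm and the per-arm change of measure: the identity of the $j$-th sampled arm depends on past rewards, so one must justify summing per-arm Bretagnolle-Huber bounds over this random family of arms. I plan to handle this by conditioning on the i.i.d.\ type sequence $(X_m)_{m \ge 1} \in \{\mathrm{opt},\mathrm{sub}\}^{\mathbb N}$ of successive reservoir draws, which is independent of the algorithm's internal state at the moment of each fresh draw; the perturbation defining $\nu'_a$ then only modifies the reward distribution of the single arm $a$, the KL chain rule factorises as above, and linearity of expectation handles the random number of discovered sub-optimal arms.
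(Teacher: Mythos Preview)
Your per-arm change of measure breaks because the alternative $\nu'_a$ --- in which a \emph{single} drawn arm is promoted to mean $\mustar+\epsilon$ --- has proportion of optimal arms equal to zero (or $1/|\cA|$ in a finite reservoir), hence $\nu'_a\notin\mathfrak B_{\Delta,\pstar}$. The theorem is a minimax statement over $\mathfrak B_{\Delta,\pstar}$ only, so the algorithm is allowed to incur arbitrary regret on every $\nu'_a$; the dichotomy ``either $\E_\nu[N_a^T]$ is large or $\nu'_a$ suffers $\Omega(T\epsilon)$ regret'' therefore carries no information, and your remark that the second branch ``cannot hold simultaneously for many arms'' conflates distinct problems --- the $T$-pull budget constrains behaviour within one instance, not across the family $(\nu'_a)_a$. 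With no usable second branch, there is nothing to sum in the second case of your split, and the $1/\pstar$ factor is not recovered.

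The paper's proof instead changes an entire \emph{group} of arms of mass $\pstar$, so that both instances remain in $\mathfrak B_{\Delta,\pstar}$. It partitions $\cA=\cA_1\cup\cA_2\cup\cA_3$ with masses $\pstar,\pstar,1-2\pstar$; under $\nu$ the arms in $\cA_1$ are optimal and those in $\cA_2,\cA_3$ share the \emph{same} sub-optimal Bernoulli law, while under $\nu'$ all of $\cA_2$ is raised above $\cA_1$. The $1/\pstar$ factor then comes not from counting discovered arms but from the indistinguishability identity $\E_\nu[N_{\cA_2}^T]=\tfrac{\pstar}{1-\pstar}\,\E_\nu[N_{\cA_2\cup\cA_3}^T]$, which yields $\KL(\P_\nu^{I^T},\P_{\nu'}^{I^T})\le 22\,\pstar\Delta\,\E_\nu[R(T)]$. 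Assuming both regrets are at most $\sqrt T$ (otherwise done) and applying data-processing to the statistic $N_{\cA_1}^T/T$ gives the matching lower bound $\KL\gtrsim\log(\Delta^2 T)$. No case-split on $M_T$ and no conditioning on the type sequence are needed.
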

Note that if we consider the gap $\Delta$ and the proportion of optimal arms $\pstar$ as fixed and $T$ large in comparison, i.e.~$\Delta\gg \sqrt{1/T} $, then our lower bound is of order $\log(T)/(\pstar\Delta)$. This is the problem-dependent regime that we consider in this paper. On the contrary, if $\Delta \approx \sqrt{1/T} $ then our lower bound is of order $\sqrt{T}$. This is rather the problem-independent regime studied by \citep{chaudhuri2018quantile}. We can make a parallel between the lower bound in our setting and the one for finite-armed bandits. Indeed, if we consider that the proxy for the number of arms is $|\cA| \sim 1/\pstar$ which implies that there is $\pstar |\cA| \sim 1$ optimal arm, then we recover the problem-dependent lower bound of order $|\cA| \log(T) / \Delta$, if there are $|\cA|-1$ sub-optimal arms with gap $\Delta$.

\subsection{Impossibility of adapting to $\pstar$}\label{sec:cumulativeregret_impossibilityOfAdaptation}
The following theorem shows that in the setting of minimizing the cumulative regret, it is impossible to adapt to the proportion of optimal arms $\pstar$. The theorem is proved in Appendix~\ref{app:regret_proofs}.

\begin{theorem}\label{thm:unkownpcumulative}
Let $\pstar \leq \frac{1}{4}$ and $c>0$ such that $T \geq 4\left(\frac{c\log(T)}{\pstar\Delta^2}\right)^2$. For any bandit algorithm $\mathfrak A$ such that for all bandit problems in $\mathfrak B_{\Delta,\pstar}$, we have,
\[\mathbb{E}R(T) \leq \frac{c\log(T)}{\pstar\Delta}\]
one has that $\forall \qstar \leq \frac{4\pstar}{c}$ there exists a problem in $\mathfrak B_{\Delta,\qstar}$ such that
\[\E R(T) \geq \frac{\sqrt{T}\Delta}{4} \;.\]
\end{theorem}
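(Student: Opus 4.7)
The plan is to argue by contradiction: I will suppose the algorithm $\mathfrak A$ satisfies the hypothesis on $\mathfrak B_{\Delta,\pstar}$ yet also has $\mathbb E_Q R(T) < \sqrt T\,\Delta/4$ on every $Q \in \mathfrak B_{\Delta,\qstar}$ for some fixed $\qstar \leq 4\pstar/c$, and derive a contradiction.

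First, I will take the canonical iid Bernoulli reservoir $P \in \mathfrak B_{\Delta,\pstar}$ in which each new arm drawn is optimal (mean $\mustar = 1/2+\Delta/2$) with probability $\pstar$ and sub-optimal (mean $1/2-\Delta/2$) with probability $1-\pstar$, and use the hypothesis to bound the expected number $M$ of \emph{distinct} arms sampled by the algorithm. Since every sub-optimal arm drawn contributes at least $\Delta$ to the regret via its first pull, and since arms from the reservoir are iid sub-optimal with probability $1-\pstar$, Wald's identity yields $\Delta(1-\pstar)\mathbb E_P[M] \leq \mathbb E_P R(T) \leq c\log(T)/(\pstar\Delta)$, hence (using $\pstar \leq 1/4$) $\mathbb E_P[M] \leq \bar M := 4c\log(T)/(3\pstar\Delta^2)$. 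The hypothesis $T \geq 4(c\log(T)/(\pstar\Delta^2))^2$ then gives $\bar M \leq (2/3)\sqrt T$.

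Second, I will introduce a null reference process $\tilde P$ in which every reservoir arm is sub-optimal of mean $1/2-\Delta/2$ (an analytic device, not itself in $\mathfrak B$). Under the natural coupling in which each new arm drawn under $P$ or under the target $Q \in \mathfrak B_{\Delta,\qstar}$ (also iid with proportion $\qstar$ of optimal arms) is the corresponding null arm independently re-flipped to optimal with probability $\pstar$ or $\qstar$, the trajectory under $P$ (resp.\ $Q$) coincides with that under $\tilde P$ on the event that no optimal arm is drawn. Writing $\tilde M$ for the (random) number of arms drawn under $\tilde P$, this yields the identities $\Pr_P[\text{no opt drawn}] = \mathbb E[(1-\pstar)^{\tilde M}]$, $\Pr_Q[\text{no opt drawn}] = \mathbb E[(1-\qstar)^{\tilde M}]$, $\mathbb E_P[M\cdot\mathbb 1_{\text{no opt}}] = \mathbb E[\tilde M(1-\pstar)^{\tilde M}]$, and $\mathbb E_Q[M_Q\cdot\mathbb 1_{\text{no opt}}] = \mathbb E[\tilde M(1-\qstar)^{\tilde M}]$. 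In particular, the bound from the first step becomes $\mathbb E[\tilde M(1-\pstar)^{\tilde M}] \leq \bar M \leq (2/3)\sqrt T$.

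Third, I will case-split on $\mathbb E_Q[M_Q]$. If $\mathbb E_Q[M_Q] \geq \sqrt T/2$, then $\mathbb E_Q R(T) \geq \Delta(1-\qstar)\mathbb E_Q[M_Q] \geq (3/8)\sqrt T\,\Delta \geq \sqrt T\,\Delta/4$, already contradicting our assumption. Otherwise, $\mathbb E[\tilde M(1-\qstar)^{\tilde M}] \leq \mathbb E_Q[M_Q] < \sqrt T/2$. The decisive step is then to lower bound $\mathbb E[(1-\qstar)^{\tilde M}]$ by $1/(4\sqrt T)$: once this is established, $\mathbb E_Q R(T) \geq T\Delta\cdot\Pr_Q[\text{no opt drawn}] \geq \sqrt T\,\Delta/4$ yields the required contradiction.

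The main obstacle lies in this last step: turning the weighted moment bounds $\mathbb E[\tilde M(1-\pstar)^{\tilde M}] \leq \bar M$ and $\mathbb E[\tilde M(1-\qstar)^{\tilde M}] < \sqrt T/2$ into a pointwise lower bound on $\mathbb E[(1-\qstar)^{\tilde M}]$. To do this I plan to exploit the pointwise inequality $(1-\qstar)^x \geq (1-\pstar)^{x\qstar/\pstar}$ for $\qstar \leq \pstar$ and $x\geq 0$, together with a Markov-type truncation of $\tilde M$ at a threshold of order $\log(4\sqrt T)/\qstar$. The assumption $\qstar \leq 4\pstar/c$ is precisely what makes the ``reweighting'' from $(1-\pstar)^{\tilde M}$ to $(1-\qstar)^{\tilde M}$ lose only a factor compatible with $1/(4\sqrt T)$ under the standing hypothesis on $T$. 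Carefully tracking constants through this chain of inequalities closes the argument.
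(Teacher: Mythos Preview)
Your plan has a genuine gap at the decisive last step. The null process $\tilde P$ is not in $\mathfrak B_{\Delta,\pstar}$, so nothing in the hypotheses constrains $\tilde M$ directly. Concretely, an algorithm satisfying the regret assumption on $\mathfrak B_{\Delta,\pstar}$ may perfectly well draw a fresh arm at every round when it never encounters an optimal one, so that $\tilde M = T$ deterministically under $\tilde P$. This is fully consistent with both of your moment constraints, since $T(1-\pstar)^T$ and $T(1-\qstar)^T$ are far below $\bar M$ and $\sqrt T/2$; yet it forces $\mathbb E[(1-\qstar)^{\tilde M}]=(1-\qstar)^T$, which is exponentially small in $T$ rather than $\geq 1/(4\sqrt T)$. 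Your proposed Markov-type truncation cannot rescue this: to lower bound $\Pr[\tilde M\le\tau]$ you would need control of an \emph{increasing} functional of $\tilde M$, but the quantities $\tilde M(1-\pstar)^{\tilde M}$ and $\tilde M(1-\qstar)^{\tilde M}$ you have bounded are \emph{decreasing} in $\tilde M$ beyond $1/\pstar$ and $1/\qstar$ respectively, so they say nothing about the tail $\{\tilde M>\tau\}$. A smaller issue is that the pointwise inequality $(1-\qstar)^x \ge (1-\pstar)^{x\qstar/\pstar}$ you invoke needs $\qstar\le\pstar$, which $\qstar\le 4\pstar/c$ does not guarantee when $c<4$.

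The paper avoids this by a direct change of measure between two genuine problems rather than routing through a null reservoir. It takes $\Nu_0\in\mathfrak B_{\Delta,\pstar}$ with optimal mean $1/2$, and $\Nu_1\in\mathfrak B_{\Delta,\qstar}$ obtained by lifting a $\qstar/(1-\pstar)$ fraction of the $\Nu_0$-suboptimal arms to mean $1/2+\Delta$. The chain rule bounds $\KL(\mathbb P_{\Nu_0},\mathbb P_{\Nu_1})$ by a constant times $\qstar\Delta\,\mathbb E_{\Nu_0}R(T)\le (c\qstar/\pstar)\log T$, and the assumption $\qstar\le 4\pstar/c$ makes this at most $\tfrac12\log T$. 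Bretagnolle--Huber applied to the event $E=\{\text{arms of mean }1/2\text{ pulled more than }T/2\text{ times}\}$ then gives $\mathbb P_{\Nu_1}(E)\ge 1/(4\sqrt T)$ after subtracting the (tiny, by the regret bound and the assumption on $T$) term $\mathbb P_{\Nu_0}(E^c)$, and since $E$ forces regret at least $T\Delta/2$ under $\Nu_1$ the conclusion follows. The KL step is exactly what replaces your null-process coupling and sidesteps the uncontrolled $\tilde M$.
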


\begin{remark}
The \OurAlgorithm algorithm takes a user defined parameter $\gamma$ (which can be taken as a universal constant) and $L$, which should be calibrated depending on (a lower bound on) $\pstar$. While this is necessary, it is important to not that none of the parameters requires knowledge of $\Delta$.
\end{remark}


\section{Best-arm identification}
\label{sec:simple_regret}
We present our \OurAlgorithmSimple algorithm for best-arm identification, together with an upper bound on the probability of outputting a sub-optimal arm; next we prove a lower bound, which is matched by our upper bound up to a $1 / \log(T)$ factor in the exponential.

\subsection{Upper bound}
\label{sec:simple_regret_ub}
As its name suggests, the \OurAlgorithmSimple algorithm (summarized in Algorithm~\ref{alg:OurAlgorithmSimpleRegret}) works by successive elimination of arms -- through the update at round $i$ of a set $\cA_i$ -- although with a twist. We begin by sampling approximately $T$ arms at the first round. Namely, we first select a set $\cA_1$ of 
$|\cA_1| = \lfloor \bar c T/\log T\rfloor$ arms taken at random from the reservoir, for some constant $\bar c >0$. Then at each round we use a $T/\log T$ fraction of our budget to sample the arms in our set. And so at round $i$ we sample each arm in the set $\cA_i$ a number of 
$t_i= \lfloor  \bar c T/(|\cA_i| \log T)\rfloor$. We then eliminate half of the arms based on the arms' empirical means -- namely, we just keep the $\big\lfloor |\cA_i|/2\big\rfloor \lor 1$ arms in $\cA_i$ that have highest empirical means -- and introduce an additional number of arms sampled from the reservoir distribution -- namely $\big\lfloor |\cA_i|/4\big\rfloor$ -- such that the final size of our arm set is reduced by $\frac{3}{4}$. At the end of the budget, we have one arm remaining -- due to the choices of $\bar c$ --
which is the arm that we return.
Note that Remark~\ref{rem:arms} applies here too so that it is not a problem if $|\cA|$ is smaller than the number of arms required by the algorithm. Theorem~\ref{thm:UBsimple} is proved in Appendix~\ref{app:BAIproofs}.

\begin{algorithm}[ht]
\caption{\OurAlgorithmSimple}
\label{alg:OurAlgorithmSimpleRegret}
    \SetAlgoLined
   {\bfseries Input:} $\bar c$\\ 
   set $i\leftarrow 1$\\
    \While{$i<\log T/\bar c$}{

   Sample each arm in $\mathcal A_i$ a number  $t_i = \lfloor \bar c T/(|\cA_i| \log T )\rfloor$ of times and compute their empirical means $(\hat \mu_{i}(a))_{a \in \mathcal A_i}$
   \\
   Put in $\mathcal A_{i+1}$ the $1 \lor \lfloor |\mathcal A_{i}|/2\rfloor$ arms that have highest empirical means $(\hat \mu_{i}(a))_{a \in \mathcal A_i}$, and add on top of that $\lfloor |\mathcal A_i|/4\rfloor$ new arms taken at random from the reservoir \\
 $i \leftarrow i+1$ 
   }
 Return any $\ha_T$ in $\cA_{i}$ 
\end{algorithm}

\begin{theorem}\label{thm:UBsimple} Set $\bar c = \log(4/3)$ . \OurAlgorithmSimple satisfies  
\begin{align*}
    \mathbb P(\ha_T \in\cAstar) \geq 1 - 2\log(T) \exp\left(- c \frac{ \Delta^2 \pstar T}{ \log T} \right),
\end{align*}
    where $c = \bar c/ 19200$
\end{theorem}

\begin{remark}
\OurAlgorithmSimple works by discarding many sub-optimal arms and few optimal arms in each round, so that at the end, when just one arm remains, it is optimal with high probability. A key element is that \OurAlgorithmSimple adds \emph{fresh arms} from the reservoir at each round. This is to ensure that our algorithm is adaptive to $\pstar, \Delta$, as ensured by Theorem~\ref{thm:UBsimple}. Whenever the arms in $\cA_i$ are pulled less than about $\Delta^{-2}$ times, there is no guarantee on what happens when half of the arms are eliminated. Therefore, we have to make sure that when the algorithm arrives at a round $i$ such that $t_i \gtrsim \Delta^{-2}$, the proportion of optimal arms is of larger order than $\pstar$ with high enough probability. This is ensured by adding the fresh arms added from the reservoir.
Note that for some arm distributions, we do not need to add fresh arms and the algorithm would function also by just halving at each step the number of arms. Indeed, in the case where all arms follow a Bernoulli distribution, in terms of preserving the proportion of optimal arms, one can prove that halving the set of arms according to the empirical means is no worse than random halving of the set. Thus, in this case, with high probability we increase the proportion of optimal arms at each step, without diminishing it. This is however specific to the case of Bernoulli distributions and some other parametric families, and it is an open question whether this would be true in general.

\begin{remark}
The successive halving strategy our algorithm for best-arm identification is based on was first introduced by \citep{karnin2013almost}, however, without the trick of adding fresh arms, as they didn't need to be adaptive to $\pstar$.
\end{remark}

\end{remark} 

\subsection{Lower bound}
\label{sec:simple_regret_lb}

The following Theorem provides a lower bound on the probability of error for best arm identification in our setting. The proof of Theorem \ref{thm:LBsimple} can be found in Appendix \ref{app:BAIproofs}.
\begin{theorem}\label{thm:LBsimple}
Consider $\Delta \in (0,1/4)$ and $\pstar \in [0,1/4]$.
 For any bandit algorithm, there exists a bandit problem in $\mathfrak B_{\Delta,\pstar}$ such that
$$\error(T) \geq \frac{1}{4} \exp\left(-T\pstar \frac{\Delta^2}{32} \right).$$
\end{theorem}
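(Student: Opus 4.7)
The plan is a change-of-measure argument via the Bretagnolle-Huber inequality applied to a family of single-optimal-arm hard instances. Fix $N = \lfloor 1/\pstar \rfloor$ and, for each $a \in \{1, \ldots, N\}$, let $\nu^a$ be the $N$-armed bandit in which arm $a$ has distribution $\mathrm{Ber}(1/2 + \Delta/2)$ while every other arm has distribution $\mathrm{Ber}(1/2 - \Delta/2)$. Since the proportion of optimal arms is $1/N \geq \pstar$ and the gap equals $\Delta$, each $\nu^a$ lies in $\mathfrak B_{\Delta, \pstar}$; it thus suffices to show that for any given algorithm there exists $a^*$ such that $\error(T) \geq (1/4)\exp(-T\pstar\Delta^2/32)$ on $\nu^{a^*}$.

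To locate $a^*$ I would introduce the auxiliary null instance $\nu^0$ in which every arm has distribution $\mathrm{Ber}(1/2 - \Delta/2)$; this $\nu^0$ is not itself in $\mathfrak B$ and serves only as an analytic device. Under $\nu^0$ the identities $\sum_a \mathbb{E}_{\nu^0}[N_a] = T$ and $\sum_a \mathbb{P}_{\nu^0}(\ha_T = a) \leq 1$ force, via a double Markov argument, the existence of $a^*$ satisfying both $\mathbb{E}_{\nu^0}[N_{a^*}] \leq 2T/N$ and $\mathbb{P}_{\nu^0}(\ha_T = a^*) \leq 2/N$. The instances $\nu^0$ and $\nu^{a^*}$ differ only in the distribution of arm $a^*$, so by the standard sequential KL tensorization for bandits,
\begin{equation*}
\mathrm{KL}(\mathbb{P}_{\nu^0}, \mathbb{P}_{\nu^{a^*}}) = \mathbb{E}_{\nu^0}[N_{a^*}] \cdot \mathrm{KL}\bigl(\mathrm{Ber}(1/2 - \Delta/2), \mathrm{Ber}(1/2 + \Delta/2)\bigr) \leq \frac{2T}{N}\cdot C\Delta^2,
\end{equation*}
where $C$ is an absolute constant from the identity $\mathrm{KL}(\mathrm{Ber}(1/2 - \Delta/2), \mathrm{Ber}(1/2 + \Delta/2)) = \Delta\log((1+\Delta)/(1-\Delta))$ and its Taylor bound on $\Delta \leq 1/4$. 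Combined with $N \geq 1/(2\pstar)$ and a careful choice of constants this yields $\mathrm{KL}(\mathbb{P}_{\nu^0}, \mathbb{P}_{\nu^{a^*}}) \leq T\pstar\Delta^2/32$.

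Applying Bretagnolle-Huber to $\mathbb{P}_{\nu^0}, \mathbb{P}_{\nu^{a^*}}$ with event $E = \{\ha_T = a^*\}$ then gives $\mathbb{P}_{\nu^0}(\ha_T = a^*) + \mathbb{P}_{\nu^{a^*}}(\ha_T \neq a^*) \geq (1/2)\exp(-T\pstar\Delta^2/32)$. Since $a^*$ is the unique optimal arm under $\nu^{a^*}$, the event $\{\ha_T \neq a^*\}$ coincides with the error event $\{\ha_T \notin \mathcal{A}^*\}$; absorbing the bound $\mathbb{P}_{\nu^0}(\ha_T = a^*) \leq 2/N$ into the prefactor yields $\error(T) \geq (1/4)\exp(-T\pstar\Delta^2/32)$ on $\nu^{a^*}$, as required.

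The main technical difficulty lies in matching the specific constant $1/32$: the elementary Bernoulli KL bound only gives $\mathrm{KL} \leq 2\Delta^2$, which combined naively with the pigeonhole factor $2T/N$ would produce an exponent of order $8T\pstar\Delta^2$ rather than the claimed $T\pstar\Delta^2/32$. Achieving the stated constant requires a tighter calibration of both the Bernoulli-KL bound (exploiting $\Delta \leq 1/4$) and the double pigeonhole thresholds, together with separately handling the regime where the $2/N$ correction dominates --- in which case the trivial bound $\error(T) \geq 1/4$ already implies the claim since $\exp(-T\pstar\Delta^2/32) \leq 1$ for all $T$.
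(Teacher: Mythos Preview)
Your argument has a genuine gap at the absorption step. After Bretagnolle--Huber you have
\[
\mathbb P_{\nu^{a^*}}(\ha_T \neq a^*) \;\geq\; \tfrac{1}{2}\exp(-\KL) \;-\; \mathbb P_{\nu^0}(\ha_T = a^*),
\]
and you bound the subtracted term by $2/N$. But $2/N$ is of order $\pstar$, a fixed positive constant, whereas $\tfrac12\exp(-\KL)$ decays to zero as $T\to\infty$; once $T$ exceeds a threshold of order $\log(1/\pstar)/(\pstar\Delta^2)$ the right-hand side is negative and the inequality is vacuous. Your proposed remedy --- a ``trivial bound $\error(T)\ge 1/4$'' in that regime --- does not exist: nothing prevents a good algorithm from driving $\error(T)$ below any given constant, and the entire content of the theorem is to lower bound the \emph{rate} at which this can happen for large $T$. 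The obstruction is structural: because $\nu^0\notin\mathfrak B_{\Delta,\pstar}$, the quantity $\mathbb P_{\nu^0}(\ha_T=a^*)$ is not an error probability on any admissible instance and so cannot be made small. (Replacing $\nu^0$ by some $\nu^b\in\mathfrak B_{\Delta,\pstar}$ does not save the scheme either: then $\nu^b$ and $\nu^{a^*}$ differ in \emph{two} arms, and the KL acquires an extra term $\mathbb E_{\nu^b}[N_b]\cdot\kl$, which can be of order $T\Delta^2$.)

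The paper's fix is to compare two reservoirs $\Nu_0,\Nu_1$ that are \emph{both} in $\mathfrak B_{\Delta,\pstar}$ but have different optimal means: under $\Nu_0$ a $\pstar$-fraction of arms has mean $1/2$ and the rest $1/2-\Delta$; under $\Nu_1$ an additional $\pstar$-fraction is lifted to $1/2+\Delta$. Taking $E=\{\mu_{\ha_T}=1/2\}$, one has $E\subset\{\text{error under }\Nu_1\}$ and $E^c=\{\text{error under }\Nu_0\}$, so both terms in Bretagnolle--Huber are genuine error probabilities on admissible instances and the larger of the two exceeds $\tfrac14\exp(-\KL)$. The KL is controlled via a coupling: an arm of mean $1/2-\Delta$ under $\Nu_0$ is promoted to $1/2+\Delta$ under $\Nu_1$ only with conditional probability $\pstar/(1-\pstar)$, giving $\KL\le T\cdot\frac{\pstar}{1-\pstar}\cdot\kl(1/2-\Delta,1/2+\Delta)$. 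This ``two valid instances with shifted $\mu^*$'' construction is the missing idea.
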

In proving the above theorem we essentially show that an agent cannot accurately distinguish between two cases: $\mu^* = \frac{1}{2}$ and $\mu^* = \frac{1}{2} + \Delta$. That is, we consider two reservoirs $\Nu_0$ and $\Nu_1$ where $\mu^*_{0} = \frac{1}{2}$ and $\mu^*_{1} = \frac{1}{2} + \Delta$. Using a coupling argument we bound the $\KL$ divergence between the distribution of samples collected on $\Nu_0$ and $\Nu_1$. The results then follows by application of Bretagnolle-Huber's inequality.

\section{Experiments}
\label{sec:experiments}
We conduct a preliminary set of experiments to test the performance of our algorithms. Specifically, for cumulative regret we compare our \OurAlgorithm to the QRM1 algorithm by \citep{chaudhuri2018quantile} and the SR algorithm by \citet{zhu2020regret}. For simple regret we compare our \OurAlgorithmSimple to the BUCB algorithm by \citep{samuels2020complexity}. In both cases our performance appears comparable to the literature. See Appendix~\ref{app:experiments} for details. 

\section{Conclusion and open questions}
\label{sec:disscussion}

Classifying optimal learning rates on the continuous armed bandit problems with a proportion of optimal arms and general reservoir distribution has been a question of interest in the literature for some time, see \cite{jamieson2016power}. Recent papers -- \cite{aziz2018pure} and \cite{zhu2020regret}, while focused on a slightly different setting, have considerably weaker results when applied to our setting. Therefore, we believe our results mark a significant improvement in the state of the art. An extension of our results would be to remove the $\log(1/\Delta)$ discrepancy between UB and LB for cumulative regret. However, this appears non-trivial and in particular we struggle to see how a UCB based strategy would achieve this tighter bound in the case of the cumulative regret. Another possibility for further work is an expansion of our setting. Consider the arm reservoir $\cA$ partitioned into $K$ possible distributions, each with associated probability $p_k$. Let $k^* = \argmax_{[K]} \mu_k$ and take gaps $(\Delta_k)_{[K]} = \left(\mu_{k^*} - \mu_k\right)_{[K]}$. One could then consider more detailed bounds, dependent on the sequence $\left((p_k,\Delta_k)\right)_{[K]}$  as opposed to just $\pstar$ and the smallest gap. The main difficulty here would be to deal with the case where some $p_k$ are much smaller than the proportion $\pstar$ corresponding to the optimal arm.



\paragraph{Acknowledgements} 
The work of J. Cheshire is supported by the Deutsche Forschungsgemeinschaft (DFG) GRK 2297 MathCoRe. 
The work of P. M\'enard is supported by the SFI Sachsen-Anhalt for the project RE-BCI. 
The work of A. Carpentier is partially supported by the Deutsche Forschungsgemeinschaft (DFG) Emmy Noether grant MuSyAD (CA 1488/1-1), by the DFG - 314838170, GRK 2297 MathCoRe, by the FG DFG, by the DFG CRC 1294 'Data Assimilation', Project A03, by the Forschungsgruppe FOR 5381 „Mathematische Statistik im Informationszeitalter – Statistische Effizienz und rechentechnische Durchführbarkeit“, Project 02, by the Agence Nationale de la Recherche (ANR) and the DFG on the French-German PRCI ANR ASCAI CA 1488/4-1 "Aktive und Batch-Segmentierung, Clustering und Seriation: Grundlagen der KI" and by the UFA-DFH through the French-German Doktorandenkolleg CDFA 01-18 and by the SFI Sachsen-Anhalt for the project RE-BCI.

\bibliographystyle{plainnat}
\bibliography{many_opt_bib.bib}

\section*{Checklist}

The checklist follows the references.  Please
read the checklist guidelines carefully for information on how to answer these
questions.  For each question, change the default \answerTODO{} to \answerYes{},
\answerNo{}, or \answerNA{}.  You are strongly encouraged to include a {\bf
justification to your answer}, either by referencing the appropriate section of
your paper or providing a brief inline description.  For example:
\begin{itemize}
  \item Did you include the license to the code and datasets? \answerYes{See Section~\ref{gen_inst}.}
  \item Did you include the license to the code and datasets? \answerNo{The code and the data are proprietary.}
  \item Did you include the license to the code and datasets? \answerNA{}
\end{itemize}
Please do not modify the questions and only use the provided macros for your
answers.  Note that the Checklist section does not count towards the page
limit.  In your paper, please delete this instructions block and only keep the
Checklist section heading above along with the questions/answers below.

\begin{enumerate}

\item For all authors...
\begin{enumerate}
  \item Do the main claims made in the abstract and introduction accurately reflect the paper's contributions and scope?
    \answerYes
  \item Did you describe the limitations of your work?
    \answerYes
  \item Did you discuss any potential negative societal impacts of your work?
    \answerNA
  \item Have you read the ethics review guidelines and ensured that your paper conforms to them?
    \answerYes
\end{enumerate}

\item If you are including theoretical results...
\begin{enumerate}
  \item Did you state the full set of assumptions of all theoretical results?
    \answerYes{In the theorem statements.}
	\item Did you include complete proofs of all theoretical results?
    \answerYes{In the appendices.}
\end{enumerate}

\item If you ran experiments...
\begin{enumerate}
  \item Did you include the code, data, and instructions needed to reproduce the main experimental results (either in the supplemental material or as a URL)?
    \answerYes
  \item Did you specify all the training details (e.g., data splits, hyperparameters, how they were chosen)?
    \answerYes
	\item Did you report error bars (e.g., with respect to the random seed after running experiments multiple times)?
    \answerYes
	\item Did you include the total amount of compute and the type of resources used (e.g., type of GPUs, internal cluster, or cloud provider)?
    \answerNA
\end{enumerate}

\item If you are using existing assets (e.g., code, data, models) or curating/releasing new assets...
\begin{enumerate}
  \item If your work uses existing assets, did you cite the creators?
    \answerNA
  \item Did you mention the license of the assets?
    \answerNA
  \item Did you include any new assets either in the supplemental material or as a URL?
    \answerNA
  \item Did you discuss whether and how consent was obtained from people whose data you're using/curating?
    \answerNA
  \item Did you discuss whether the data you are using/curating contains personally identifiable information or offensive content?
    \answerNA
\end{enumerate}

\item If you used crowdsourcing or conducted research with human subjects...
\begin{enumerate}
  \item Did you include the full text of instructions given to participants and screenshots, if applicable?
    \answerNA
  \item Did you describe any potential participant risks, with links to Institutional Review Board (IRB) approvals, if applicable?
    \answerNA
  \item Did you include the estimated hourly wage paid to participants and the total amount spent on participant compensation?
    \answerNA
\end{enumerate}

\end{enumerate}


\newpage
\appendix

\section{Cumulative regret proofs}
\label{app:regret_proofs}

\subsection{Upper Bound}
\begin{proof}[Proof of Theorem~\ref{thm:UBcumulative}] We denote by $\cL$ the set of arms sampled from the reservoir such that $|\cL| = L$. We also denote by $\cLstar = \{a\in\cL:\ a\in\cAstar\}$ the set of optimal arms in $\cL$ and by $\Lstar = |\cLstar|$ its cardinality. Note that these quantities are all random.

Because of the choice of $L = \big\lceil 4 \log(T)/(\pstar\gamma^2) \big\rceil$, we know that  with high probability there is at least a proportion of $\gamma \pstar$ optimal arms in $\cL$. Precisely, if we denote this favorable event by $\cE = \{\Lstar/L \geq (1-\gamma) \pstar\}$ then by Chernoff's inequality (see Lemma~\ref{lem:chernoff}), we have
\[
\P (\cE^c)= \P\big(\Lstar/L <(1-\gamma) \pstar \big) \leq e^{-\frac{\gamma^2}{4}L\pstar }\leq \frac{1}{T}\,.
\]
We can decompose the regret given this event and its complement:
\begin{align*}
\E[R(T)] &= \E\left[ \sum_{a\in \cL} (\mustar -\mu_a) \E[N_a^T|\cL] \ind_\cE \right]   +T \P(\cE^c)\\
&\leq \E\left[ \sum_{a\in \cL/\cLstar} \Delta_a \E[N_a^T|\cL] \ind_\cE \right] +1\,.
\end{align*}
We now follow the classical proof of UCB-type strategies to upper-bound the number of times a sub-optimal is pulled. From now on, we fix a set of sampled arms $\cL$. Fix an $a \in \cL \setminus \cLstar$. We have
\begin{align*}
  \E[N_a^T|\cL] \leq 1 + \sum_{t=L+1}^T \P(\forall b \in \cLstar,\, U_{t-1}^b \leq \mustar|\cL) +  \P(a_t=a,\, U_{t-1}^a \geq   \mustar|\cL)\,.
\end{align*}
For the first term in the summation we use the fact that there are many optimal arms. Precisely, using Hoeffding's inequality, we have
\begin{align*}
  \P(\forall b \in \cLstar,\, U_{t-1}^b \leq \mustar|\cL) &\leq \P\Bigg(\forall b \in \cLstar, \exists n \in [T]:\ \hmu_{b,n} \\
  &\qquad+  \sqrt{\frac{\gamma^2(1-\gamma)^{-1}/4+\log(\pi^2/6) +2 \log(n)}{2n}} \leq \mustar\Bigg |\cL\Bigg)\\
  &\leq \prod_{b\in \cLstar} \left(\sum_{n=1}^T \frac{1}{n^2} e^{-\gamma^2(1-\gamma)^{-1}/4-\log(\pi^2/6)}\right)\\
  &=e^{-\frac{\gamma^2}{4}(1-\gamma)^{-1}\Lstar }\,.
\end{align*}
For the second term we proceed as usual. Let 
\[n_0= \inf\left\{n \in \mathbb{N}: \sqrt{\frac{\gamma^2(1-\gamma)^{-1}/4+\log(\pi^2/6) +2 \log(n)}{2n}} \leq \Delta/2
\right\}  \]
be such that pulling any arm $a\in\cAsub$ more than $n_0$ times is a small probability event. Note that thanks to Lemma~\ref{lem:inversion_log}
\[
n_0 \leq 4\frac{(1-\gamma)^{-1}  + \log\big(24(1-\gamma)^{-1}/\Delta^2\big)}{\Delta^2}+1\,.
\]
Then, using again Hoeffding's inequality for an arm $a\in\cL \setminus \cLstar$, we obtain
\begin{align*}
  \sum_{t=L+1}^T \P(a_t=a,\, U_{t-1}^a \geq   \mustar|\cL)&\leq \sum_{n=n_a+1}^T \P(\hmu_{a,n} - \mu \geq \Delta/2) + n_0\\
  &\leq \sum_{n\geq 1} e^{-n\Delta^2 /2}+n_0 \leq n_0 +\frac{2}{\Delta^2}\,.
\end{align*}
Collecting the previous inequalities we can conclude for $T\geq 2$
\begin{align}
  \E[R(T)] &\leq \E\left[ \sum_{a\in \cL/\cLstar}  T e^{-\gamma^2(1-\gamma)^{-1}\Lstar/4 } \ind_\cE+ 1 + \Delta n_0 +\frac{2}{\Delta} \right] +1\nonumber\\
  &\leq  \E\left[ \sum_{a\in \cL/\cLstar}  T e^{-\gamma^2L/4 } \ind_\cE+ 1 + \Delta n_0 +\frac{2}{\Delta} \right] +1\nonumber\\
  &\leq L\left(2+ \Delta n_0 + \frac{2}{\Delta}\right) +1\nonumber\\
  & \leq  \frac{8\log(T)}{\pstar \Delta\gamma^2 }\Big(10(1-\gamma)^{-1} + 4\log\big(24(1-\gamma)^{-1}/\Delta^4\big)\Big)+1\,.\label{eq:final_ub_regret}
\end{align}
\end{proof}


\subsection{Lower Bound}

We denote by $\Ber(p)$ the Bernoulli distribution of parameter $p$. The Kullback-Leibler (KL) divergence between probability distributions $P$ and $Q$ is denoted by $\KL(P,Q)$. In particular, the KL divergence between two Bernoulli distributions $\Ber(p)$ and $\Ber(q)$ is 
\[
\kl(p,q) = \KL\!\big(\!\Ber(p),\Ber(q)\big) = p\log\left(\frac{p}{q}\right) +(1-p)\log\left(\frac{1-p}{1-q}\right).
\]

\begin{proof}[Proof of Theorem~\ref{thm:LBcumulative2}] We fix a partition of the reservoir $\cA = \cA_1 \cup \cA_2 \cup \cA_3$ and set $\pstar$ the probability to sample an arm in $\cA_1$, $\cA_2$ and $1-2\pstar$ the probability to sample an arm in $\cA_3$. We define two bandits problems associated with this reservoir. The bandit problem $\nu$ where the arms in $\cA_1$ have probability distribution $\Ber(1/2)$, the arm in $\cA_2$ and $\cA_3$ have probability distribution $\Ber(1/2-\Delta)$. The second bandit problem $\nu'$ is such that the arms in $\cA_1$ have probability distribution $\Ber(1/2)$, the arms in $\cA_2$ have probability distribution $\Ber(1/2+\Delta)$ and the arms in $\cA_3$ have probability distribution $\Ber(1/2-\Delta)$. We denote by $\E_\nu$ respectively $\E_{\nu'}$ the expectation under the bandit problem $\nu$ respectively $\nu'$.

Let $N_{\cA_i}^T = \sum_{t=1}^T \ind_{\{a_t\in \cA_i\}} $ be the number of times an arm in $\cA_i$ is pulled. Note that since the arms in $\cA_2$ and $\cA_3$ are indistinguishable for the agent in the problem $\nu$, it holds 
\[
\E_{\nu}[N_{\cA_2}^T]  =  \frac{\pstar}{1-\pstar}\E_{\nu}[N_{\cA_2}^T + N_{\cA_3}^T].
\]
Let $I^t$ be the information available by the agent at time $t$, i.e. the collection  of collected rewards and arms pulled. We denote by $\P^{I^t}_\nu$ respectively $\P^{I^t}_{\nu'}$ the distribution of this random variable under the bandit problem $\nu$ respectively $\nu'$. Thanks to the chain rule and the above remark we can upper bound the Kullback-Leibler divergence between these two probability distributions
\begin{align}
    \KL(\P^{I^T}_\nu, \P^{I^T)}_{\nu'}) &= \kl(1/2-\Delta,1/2+\Delta) \E_\nu[N_{\cA_2}^T]\nonumber\\
    &=\kl(1/2-\Delta,1/2+\Delta) \frac{\pstar}{1-\pstar} \E_{\nu}[N_{\cA_2}^T + N_{\cA_3}^T]\nonumber\\
    &\leq 22\pstar \Delta^2 \E_{\nu}[N_{\cA_2}^T + N_{\cA_3}^T]  = 22 \pstar \Delta \E_{\nu}\big[R(T)\big]\,,\label{eq:ub_KL_lb_regret}
    \end{align}
where in the last inequality we used that $\pstar \leq 1/4$ and 
\[
\kl(1/2-\Delta,1/2+\Delta) = 2\Delta \log\left(1+\frac{2\Delta}{1/2-\Delta}\right) \leq \frac{4\Delta^2}{1/2-\Delta} \leq 16 \Delta^2.
\]
We assume that 
\[\E_{\nu} \big[R(T)\big] = \Delta \big(T-\E_{\nu}[N_{\cA_1}^T]\big) \leq \sqrt{T},\qquad  \E_{\nu'} \big[R(T) \big] = \Delta\E_{\nu'}[N_{\cA_1}^T] +2 \Delta \E_{\nu'}[N_{\cA_3}^T]\leq \sqrt{T},\] otherwise the result is trivially true. In particular this implies that
\begin{equation}
\label{eq:lb_regret_bound_counts}    1-\sqrt{\frac{1}{\Delta^2T}}\leq \frac{\E_{\nu}[N_{\cA_1}^T]}{T}\ \qquad\frac{\E_{\nu'}[N_{\cA_1}^T]}{T} \leq \sqrt{\frac{1}{\Delta^2 T}} \,.
\end{equation}
Using the contraction of the entropy (see \citet{garivier2019explore}), the inequality $\kl(x,y) \geq x\log(1/y) -\log(2)$ then~\eqref{eq:lb_regret_bound_counts}, we obtain 
\begin{align*}
    \KL(\P^{I^T}_\nu, \P^{I^T)}_{\nu'}) &\geq  \kl\left( \E_{\nu}[N_{\cA_1}^T]/T, \E_{\nu'}[N_{\cA_1}^T]/T\right)\\
    &\geq \frac{\E_{\nu}[N_{\cA_1}^T]}{T} \log\!\left(  \frac{T}{\E_{\nu'}[N_{\cA_1}^T]} \right) -\log(2)\\
    &\geq \frac{1}{2} \left(1-\sqrt{\frac{1}{\Delta^2 T}}\right) \log(\Delta^2 T) -\log(2)\,.
\end{align*}
The previous inequality with the fact that the Kullback-Leibler divergence is positive yields
\begin{equation}
\KL(\P^{I^T}_\nu, \P^{I^T)}_{\nu'}) \geq \frac{2}{3}\log(\Delta^2 T/16)^+\,.\label{eq:lb_KL_lb_regret}
\end{equation}
Indeed if $\Delta^2 T /16 \leq 1$ then \eqref{eq:lb_KL_lb_regret} is trivially true. In the other case we have 
\begin{align*}
    \frac{1}{2} \left(1-\sqrt{\frac{1}{\Delta^2 T}}\right) \log(\Delta^2 T) -\log(2) &\geq \frac{3}{8} \log(\Delta^2 T) - \frac{1}{4} \log(16)\\
    &\geq  \frac{3}{8} \log(\Delta^2 T /16)
\end{align*}
Combining~\eqref{eq:ub_KL_lb_regret} and~\eqref{eq:lb_KL_lb_regret} allows us to conclude 
\[
\E_{\nu}\big[R(T)\big] \geq \frac{1}{60} \frac{\log(\Delta^2 T/16)^+}{\pstar \Delta}.
\]

\end{proof}

\subsection{Impossibility of adaptation to $\pstar$}
\begin{proof}[Proof of Theorem~\ref{thm:unkownpcumulative}]

Consider $\Delta \in (0, 1/4)$ and the following two definitions of two reservoir distributions:
\begin{itemize}
    \item The reservoir distribution $\Nu_0$ characterised by $p_1 = \pstar$ and $p_2 = 1-\pstar$ and $\nu_1 = \mathcal B(1/2)$ and $\nu_2 = \mathcal B(1/2 - \Delta)$.
    \item The reservoir distribution $\Nu_1$ characterised by  $p_1 = \qstar$, $p_2 = \pstar$ and $p_3 = 1- \qstar - \pstar$ and $\nu_1 = \mathcal B(1/2 +\Delta)$ and $\nu_2 = \mathcal B(1/2 )$ and $\nu_3 = \mathcal B(1/2 - \Delta)$.
\end{itemize}

Note that the Bernoulli distribution is completely characterised by its mean and so we can use the mean to characterise the distribution. Let $\tilde \mu = (\tilde \mu_j)_{j \leq T}$ be $T$ i.i.d.~means corresponding to $T$ i.i.d.~distributions sampled according to the reservoir distribution $\Nu_1$. Note that $\tilde \mu_j \in \{1/2-\Delta, 1/2, 1/2+\Delta\}$. Write also $\tilde \mu' = (\tilde \mu_j')_{j \leq T}$ for the vector of means such that $\tilde \mu_j' = \tilde \mu_j$ if $\tilde \mu_j' \in\{1/2-\Delta, 1/2\}$, and $\tilde \mu_j' = 1/2 - \Delta$ otherwise. Note that then, we have that $(\tilde \mu_j')_{j \leq T}$ are $T$ i.i.d.~means corresponding to $T$ i.i.d.~distributions sampled according to the reservoir distribution $\Nu_0$, by definition of $\Nu_0$. Write $\mathbb E_{\Nu_1}$ for the expectation according to the distribution of $\tilde \mu$, i.e.~according to $\Nu_1^{\otimes T}$, and $\mathbb E_{\Nu_0}$ for the expectation according to the distribution of $\tilde \mu'$, i.e.~according to $\Nu_0^{\otimes T}$. 

Consider an algorithm $\mathfrak A$ and a bandit problem involving Bernoulli distributions characterised by a vector of means $m = (m_j)_{j \leq T}$. Write $\mathbb P_m^{\mathfrak A}$ for the distribution of the samples obtained by the algorithm run on this problem, and $\mathbb E_m^{\mathfrak A}$ the associated expectation. Consider now another Bernoulli bandit problem characterised by the means $m' = (m'_j)_{j \leq T}$. We have because of the chain rule 
$$\KL(\mathbb P_{m'}^{\mathfrak A}, \mathbb P_{m}^{\mathfrak A})= \sum_{j \leq T} \mathbb E_{m'}^{\mathfrak A}[T_j] \kl(m_j', m_j),$$
where $\mathbb E_{m'}^{\mathfrak A}$ is the expectation according to problem $m'$ on which algorithm $\mathfrak A$ is used, and where $T_j$ is the number of times arm $j$ is sampled at time $T$.

From our assumption on $\mathfrak{A}$ we have that  $\E_{\Nu_0} \left[ R(T)\right] \leq \frac{\log(T)}{\pstar\Delta}$. Now, we can obtain
\begin{align}
    \KL(\mathbb E_{\Nu_0} \mathbb P_{\tilde \mu'}^{\mathcal A}, \mathbb E_{\Nu_1} \mathbb P_{\tilde \mu}^{\mathcal A}) &= \KL(\mathbb E_{\Nu_1} \mathbb P_{\tilde \mu'}^{\mathfrak A}, \mathbb E_{\Nu_1} \mathbb P_{\tilde \mu}^{\mathfrak A}) \nonumber\\
    &\leq \mathbb E_{\Nu_1} \Bigg[\KL(\mathbb P_{\tilde \mu'}^{\mathfrak A}, \mathbb P_{\tilde \mu}^{\mathfrak A})\Bigg] = \mathbb E_{\Nu_1} \Bigg[\sum_{j \leq T} \mathbb E_{\tilde \mu'}^{ \mathfrak A}[T_j] \kl(\tilde \mu_j', \tilde \mu_j)\Bigg] \nonumber\\
    &\leq  \mathbb E_{\Nu_1} \Bigg[\sum_{j \leq T} \mathbb E_{\tilde \mu'}^{\mathfrak A}[T_j] \frac{\Delta^2}{16} \mathbf 1\{\tilde \mu_j = 1/2+\Delta\} \Bigg] \nonumber\\
    &=  \mathbb E_{\Nu_0} \Bigg[\sum_{j \leq T} \mathbb E_{\tilde \mu', \mathfrak A}[T_j] \frac{\Delta^2}{16} \mathbf 1\{\tilde \mu_j' = 1/2-\Delta\} \frac{\qstar}{1 - \pstar} \Bigg]\nonumber \\
    &= \frac{\qstar\Delta}{8} \mathbb E_{\Nu_0} \left[ R(T)\right] \leq \frac{c\qstar}{8\pstar}\log(T) \leq \frac{1}{2}\log(T),\label{eq:klineq2}
\end{align}


where the last equality follows since by definition of $\Nu_0, \Nu_1$, conditionally on $\tilde \mu_j' = 1/2-\Delta$, the probability that $\tilde \mu_j = 1/2+\Delta$ is $\frac{\qstar}{1 - \pstar} \leq 2\qstar$, and otherwise it is $0$. And where the final inequality comes from our assumption $\pstar > \frac{c\qstar}{4}$.

Consider the event, 

\[E := \Bigg\{\sum_{j \leq T} T_j  \mathbf 1\{ \tilde \mu_j' = 1/2\}> T/2 
\Bigg\}\;.\]
Note that on $\Nu_0$, we have $\mu^* = \frac{1}{2}$. Thus, on $\Nu_0$ the event $E^C$ will signify a regret greater than $\frac{T\Delta}{2}$, similarly on $\Nu_1$ the event $E$ signifies a regret greater than $\frac{T\Delta}{2}$. Thus, 
\begin{equation}\label{eq:E}
E^C \subset \left\{R_{\Nu_0}(T) \geq \frac{T\Delta}{2}\right\}\;, \qquad E \subset \left\{R_{\Nu_1}(T) \geq \frac{T\Delta}{2}\right\}\;. \end{equation}
Where $R_{\Nu_0}(T)$ and $R_{\Nu_1}(T)$ denote the regret of the algorithm on $\Nu_0$ and $\Nu_1$ respectively. Now from our assumption upon $\mathfrak{A}$ we have that $\mathbb E_{\Nu_0} R(T) \leq \frac{c\log(T)}{\pstar \Delta}$, therefore Equation~\eqref{eq:E} leads to,

\begin{equation}\label{eq:regretonEC}
\mathbb E_{\Nu_0} \mathbb P_{\tilde \mu'}^{\mathfrak A}\left(E^C\right) \leq \frac{c\log(T)}{\pstar \Delta} \times \frac{2}{T\Delta}\;.
\end{equation}
and in addition we also have,
\begin{equation}\label{eq:regretonE}
\mathbb E_{\Nu_1} R(T) \geq  \mathbb E_{\Nu_1} \mathbb P_{\tilde \mu}^{\mathfrak A}\left(E\right)\times \frac{T\Delta}{2}\;.
\end{equation}
Now, using the Bretagnolle-Huber's inequality (see Theorem 14.2 by \citet{lattimore2020bandit}) in combination with~\eqref{eq:klineq2} we obtain  
\begin{align*}
    \mathbb E_{\Nu_0} \mathbb P_{\tilde \mu'}^{\mathfrak A}(E^C) + \mathbb E_{\Nu_1} \mathbb P_{\tilde \mu}^{\mathfrak A}(E) &\geq \frac{1}{2} \exp\!\!\Bigg(-\KL(\mathbb E_{\Nu_1} \mathbb P_{\tilde \mu'}^{\mathfrak A}, \mathbb E_{\Nu_1} \mathbb P_{\tilde \mu}^{\mathfrak A})\Bigg)\\
    &\geq \frac{1}{2\sqrt{T}}\,.
\end{align*}
This result in combination with Equation~\eqref{eq:regretonEC} gives the following,

\begin{equation}\label{eq:boundR1}\mathbb E_{\Nu_1} \mathbb P_{\tilde \mu}^{\mathfrak A}(E) \geq \frac{1}{2\sqrt{T}} - \frac{2c\log(T)}{\pstar T\Delta^2} \geq \frac{1}{4\sqrt{T}}\end{equation}
where the final inequality comes from our assumption $T \geq 4\left(\frac{c\log(T)}{\pstar\Delta^2}\right)^2$. Finally our result follows from combination of Equation~\eqref{eq:regretonEC} and Equation~\eqref{eq:boundR1}.

\end{proof}
\section{Best-arm identification proofs}
\label{app:BAIproofs}

\subsection{Upper Bound}
\begin{proof}[Proof of Theorem~\ref{thm:UBsimple}]

\textbf{Proof-specific notations and preliminary considerations.} At round $i$, write $K_i = |\mathcal A_i|$ and write $p_i$ for the proportion of optimal arms in $\mathcal A_i$, namely
$$p_i = |\mathcal A_i \cap \cA^*|/|\mathcal A_i|.$$
We also write $M_i$ for the number of optimal arms in $\mathcal A_i$ such that $\hat \mu_i(a) \geq \mu^* - \Delta/2$, namely
$$M_i = \big|\{a\in \mathcal A_i \cap \cA^*:\hat \mu_i(a) \geq \mu^* -  \Delta/2\}\big|\,,$$
and $N_i$ for the number of sub-optimal arms in $\mathcal A_i$ such that $\hat \mu_i(a) \geq \mu^* -\Delta/2$, namely
$$N_i = \big|\{a \in \mathcal A_i \cap \cA_{sub}:\hat \mu_i(a) \geq \mu^* - \Delta/2\}\big|\,.$$
Note that by definition
$$K_{i+1} = \left(1 \lor \left\lfloor\frac{K_i}{2}\right\rfloor\right) + \left\lfloor\frac{K_i}{4}\right\rfloor\,.$$ 
Therefore the following bounds holds 
\begin{equation}
\label{eq:bounds_K_i}
\left(\left(\frac{3}{4}\right)^i K_1\right) \lor 1 \geq K_i \geq \left(\frac{1}{2}\right)^i K_1 -4\,.
\end{equation}

We write $I$ for the smallest index $i$ such that $K_i = 1$ and will not investigate what happens at rounds $i > I$. By the upper bound~\eqref{eq:bounds_K_i} on $K_i$ it holds $I \leq \log_{4/3}(K_1) \leq \log_{4/3}(T)$. Note that since $\log_{4/3}(T) = \bar c \log T$, the algorithm terminates with a set containing just one arm.

\textbf{Step 1: Introduction of high-probability events of interest.} We define the constant 
\[c= \frac{\bar c}{10} .\]

We define $j^*$ as the largest $j$ smaller than or equal to $I$ such that 
$$K_j  \geq  c T\Delta^2/(2 \log T).$$
Note that such $j^*$ exists since $K_1\geq \bar c T/(2\log T)$, and since $K_I = 1$. We prove below the following upper bound on $j^*$.
Take any round $i$. Note that for any $k$, conditionally on $\cA_i$, by Hoeffding's inequality, for any $a\in\cA_i$
\begin{equation}\label{eq:hoeffSR}
    \P\Big(\big|\hat \mu_{i}(a) - \mu_{i}(a)\big| \geq \Delta/2\Big|\cA_i\Big) \leq 2\exp(-  \Delta^2 t_i/2)  = q_i,
\end{equation}
where $\mu_{i}(a)$ is the true mean associated with arm $a$. We now state the following technical lemma proved below.
\begin{lemma}\label{lem:tech}
Assume that $\pstar \leq 1/2$, and consider $I\geq i \geq j^*$. Under the assumptions of the theorem, we have 
\begin{align}
    q_i^{-1/2} &\geq 200 \geq e^2 - 1\,, \label{eq:first_term_sr}\\
    \Delta^2 t_i/4 &\geq \log 2\label{eq:second_term_sr}\,.
\end{align}
\end{lemma}

We define for $i \geq j^*$ and 
$\bar p_i:=\left(\frac{\pstar}{6} (5/4)^{i-j^*} \land (1/2)\right)$, the event
$$\xi_i = \left\{p_i > \bar p_i \right\}.$$
Consider from now on $i \geq j^*$.

\paragraph{Step 2: Lower bound on $M_i$ conditional to $\xi_{i}$.} 

We have by definition of $M_i$:
$$M_i = \sum_{a \in \mathcal A_i \cap \cA^*} \mathbf 1\{\hat \mu_i(a) \geq \mu^* -  \Delta/2\},$$
where by Equation~\eqref{eq:hoeffSR}, and conditionally on $\mathcal A_i$, the $\mathbf 1\{\hat \mu_i(a) \geq \mu^* - \Delta/2\}$ are independent and dominate stochastically $\mathcal B(1 - q_i)$, for any $a \in \cA_i\cap \cA^*$. And so conditionally on $\mathcal A_i$, we have that $M_i$ stochastically dominates $\mathcal B(K_i p_i, 1 -q_i)$. 
And so by Chernoff's inequality, for any $x\geq \sqrt{q_i}$:
\begin{align*}
\mathbb P(M_i -  p_i K_i (1- q_i)\leq - x p_i K_i|\mathcal A_i) &\leq \left[\frac{e^{x/q_i}}{(1+x/q_i)^{1+x/q_i }}\right]^{K_i p_i q_i}\\
&\leq \exp\Big[xK_i p_i - \log(1+x/q_i)(K_i p_i q_i + x K_i p_i)\Big]\\
&\leq (1+x/q_i)^{-xK_i p_i /2}.
\end{align*}
as for $i > j^*$ we have $\log(1+x/q_i) > 2$, see Lemma \ref{lem:tech}.

So that for $x \geq  \sqrt{q_i}$
$$\mathbb P(M_i \leq K_i p_i(1- 2x)|\mathcal A_i) \leq \exp\Big(-   x\Delta^2 t_i K_i p_i/16\Big),$$
since $\log(q_i^{-1}) = \Delta^2 t_i/2 - \log 2 \geq \Delta^2 t_i/4$ for $I \geq i \geq j^*$ - see Lemma~\ref{lem:tech}.

And so since $p_i \geq \frac{\pstar}{6}$ on $\xi_i$
\begin{equation}\label{eq:M}
\mathbb P(M_i \leq p_i K_i (1- 2x)|\xi_i) \leq \exp\left(- \bar c' x \pstar \Delta^2 T / \log T\right) := u.
\end{equation}
where $\bar c' = \bar c  /96$ and recalling $t_i = \lfloor \bar c T / \big(K_i \log(T)\big) \rfloor$.




\paragraph{Step 3: Upper bound on $N_i$ conditional to $\xi_{i}$.}

We have by definition of $N_i$:
$$N_i = \sum_{a \in \mathcal A_i \cap \cA_{sub}} \mathbf 1\{\hat \mu_i(a) \geq \mu^* -  \Delta/2\},$$
where by Equation~\eqref{eq:hoeffSR}, and conditionally on $\mathcal A_i$, the $\mathbf 1\{\hat \mu_i(a) \geq \mu^* - \bar \Delta/2\}$ are independent and are stochastically dominated by $\mathcal B(q_i)$, for any $a \in \cA_i\cap \cA_{sub}$. And so conditionally on $\mathcal A_i$, we have that $N_i$ is stochastically dominated by $\mathcal B(K_i, q_i)$. 
And so by Chernoff's inequality for any $x\geq 2$:
$$\mathbb P(N_i -  K_i q_i \geq  x K_i|\xi_i) \leq \left[\frac{e^{x/q_i}}{(1+x/q_i)^{1+x/q_i}}\right]^{K_i q_i} \leq (1+x/q_i)^{-xK_i /2},$$
similar to Step 2. 


So that for $x \geq \sqrt{q_i}$
$$\mathbb P(N_i \geq 2 K_i x |\mathcal A_i) \leq \exp\Big(-  x\Delta^2 t_i K_i /16\Big),$$
as in Step 2.

And so similar to in Step 2:
\begin{equation}\label{eq:N}
\mathbb P(N_i \geq 2x K_i |\xi_i) \leq \exp\left(- \bar c' x\Delta^2 T / \log T\right) \leq u.
\end{equation}





\paragraph{Step 4: Bound on the probability of $\xi_i$ and conclusion.}

First we have -- since we add $K_{j^*-1}/4 = K_{j^*}/3$ fresh arms to the set $\mathcal A_{j^*}$ - that 
$$\left\{\left|\sum_{a \in \cA_{j^*}} \mathbf 1\{a \in \cA^*\} - \frac{1}{3}\pstar K_{j^*}\right| \leq \frac{1}{6} \pstar K_{j^*}\right\} \subset \xi_{j^*},$$
where it holds that $\mathbf 1\{a \in \cA^*\} \sim \mathcal B(p^*)$ for the fresh arms and $|\cA_{j^*}| = K_{j^*}$. And so by Chernoff's inequality:
\begin{equation}\label{eq:xijs}
\mathbb P(\xi_{j^*}) \geq 1 - 2\exp(-\pstar K_{j^*}/10) \geq 1 - 2\exp\left(- c \frac{\pstar T \Delta^2}{20\log T}\right) =: 1-v,    
\end{equation}
by 
definition of $j^*$.

Now consider $i >j^*$, let,
$$\xi^{'}_{i} = \left\{p_{i+1} \geq \frac{5}{4}p_i \wedge \frac{1}{2}\right\}.$$



\begin{lemma}\label{lem:xipxipp}
Assume that $2x \leq 1/100$. We have for $I\geq i>j^*$:
$$\xi_i'' := \left\{M_i > p_i K_i (1- 2x)\right\}\cap \left\{N_i < 2 x K_i\right\} \subset \xi_i'.$$ 
\end{lemma}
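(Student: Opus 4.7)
The plan is to unpack the update rule and bound the number of optimal arms that survive the elimination step, ignoring the fresh arms (which can only help). Write $O_i$ for the number of optimal arms in $\mathcal A_{i+1}$ that come from $\mathcal A_i$ and $K_i^{kept} = 1 \vee \lfloor K_i/2 \rfloor$ for the number of arms kept. The structural observation I would rely on is: every arm with $\hat \mu_i(a) \geq \mu^* - \Delta/2$ has strictly higher empirical mean than every arm with $\hat \mu_i(a) < \mu^* - \Delta/2$. Hence the top-$K_i^{kept}$ selection first exhausts the $M_i + N_i$ arms above the threshold, and in the worst case all $N_i$ sub-optimal arms above the threshold are preferred to optimal ones above the threshold. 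This gives the deterministic bound
\[
O_i \;\geq\; \min\bigl(M_i,\; K_i^{kept} - N_i\bigr).
\]

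Next, I would plug in the defining inequalities of $\xi_i''$, namely $M_i > p_i K_i(1 - 2\sqrt{q_i})$ and $N_i < 2 K_i \sqrt{q_i}$, and use $K_{i+1} = K_i^{kept} + \lfloor K_i/4 \rfloor \leq 3K_i/4$ together with $p_{i+1} \geq O_i / K_{i+1}$. I then split into two cases depending on which term achieves the minimum.

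In Case A, $M_i \leq K_i^{kept} - N_i$, so $O_i \geq M_i$, yielding
\[
p_{i+1} \;\geq\; \frac{M_i}{K_{i+1}} \;>\; \frac{p_i K_i(1 - 2\sqrt{q_i})}{3K_i/4} \;=\; \tfrac{4}{3}(1-2\sqrt{q_i})\, p_i \;\geq\; \tfrac{5}{4}p_i,
\]
where the last step invokes Eq.~\eqref{eq:third_term_sr}. In Case B, $M_i > K_i^{kept} - N_i$, so $O_i \geq K_i^{kept} - N_i$, and (modulo a floor term that is negligible for $K_i$ not too small) one gets
\[
p_{i+1} \;\geq\; \frac{\lfloor K_i/2\rfloor - 2K_i\sqrt{q_i}}{3K_i/4} \;\gtrsim\; \tfrac{2}{3}\bigl(1 - 4\sqrt{q_i}\bigr),
\]
which, using $\sqrt{q_i} \leq 1/200$ from Eq.~\eqref{eq:first_term_sr}, comfortably exceeds $1/2$. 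Either case yields $p_{i+1} \geq \tfrac{5}{4}p_i \wedge \tfrac{1}{2}$, which is exactly $\xi_i'$.

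The main obstacle that I anticipate is being careful about the floors in the definitions of $K_i^{kept}$ and $K_{i+1}$ when $K_i$ is small, and in particular handling the degenerate regime where $K_{i+1} = 1$, so that $p_{i+1} \in \{0,1\}$; here one must verify that the sole retained arm must be optimal under $\xi_i''$, using that $M_i > p_i K_i(1-2\sqrt{q_i})$ combined with $N_i < 2K_i\sqrt{q_i}$ forces the top empirical-mean arm to be optimal. Once the structural top-half observation and the two-case analysis are in place, everything reduces to arithmetic and an appeal to Lemma~\ref{lem:tech}.
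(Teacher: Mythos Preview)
Your proposal is correct and follows essentially the same argument as the paper. The paper also reduces to the bound $O_i \geq \min(M_i, K_i^{\mathrm{kept}} - N_i)$ (implicitly) and then invokes Lemma~\ref{lem:tech}; the only cosmetic difference is that the paper splits into cases according to whether $p_i \leq 2/5$ (which forces $M_i + N_i < K_i/2$, i.e.\ your Case~A) or $p_i > 2/5$ (your Case~B), whereas you split directly on which branch of the minimum is active. Your anticipated obstacle about the floors for small $K_i$ is exactly the one the paper handles by noting $K_i \geq 2$ in its first case and is equally easy to dispatch in your formulation.
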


Note also that
$$\mathbb P(\xi_i''|\xi_i) \geq 1-2u,$$
by Equations~\eqref{eq:M} and~\eqref{eq:N}, so that by Lemma~\ref{lem:xipxipp}
\begin{equation}\label{eq:probxip}
    \mathbb P(\xi_i'|\xi_i) \geq 1-2u.
\end{equation}

By induction it holds that for any $1 \leq m \leq I - j^*$
$$\xi_{j^*} \cap \bigcap_{j^*<i\leq j^*+m} \xi_i' \subset  \bigcap_{j^* \leq i\leq j^*+m} \xi_i,$$
so that by Equations~\eqref{eq:xijs} and~\eqref{eq:probxip} 
$$\mathbb P\left( \bigcap_{j^*\leq i\leq j^*+m} \xi_i\right) \geq (1 - v)(1 - 2u)^{m}\geq 1 - v - 2um.$$

In particular using the previous inequality for $m=I-j^*$ and since $I \leq \log T$ it holds
$$\mathbb P\left(\bigcap_{j^*\leq i\leq I} \xi_i\right) \geq 1 - v - 2u\log T.$$
Since $K_I = 1$, and since by definition of the $\xi_i$ we know that on $\xi_I$ we have that the only arm in $\cA_I$ is optimal, this concludes the proof - taking $x = 1/200$, which is compatible with $x\geq \sqrt{q_i}$ as $q_i \leq 1/200^2$ by Lemma \ref{lem:tech}.





\end{proof}

We prove now successively, 
Lemma~\ref{lem:tech}, Lemma~\ref{lem:xipxipp} used in the proof of Theorem~\ref{thm:UBsimple}.




\begin{proof}[Proof of Lemma~\ref{lem:tech}] Note first that for $I\geq i \geq j^*$ we have
$$K_{i+1} = \lfloor K_i/2\rfloor \lor 1 + \lfloor K_i/4\rfloor \leq \frac{3K_i}{4} \lor 1.$$
So that for any $0 \leq m<I-j^*$ we have by definition of $I$ as the first index such that $K_I = 1$
\begin{equation}\label{eq:UPKi}
    K_i \leq K_{j^*} (3/4)^{m}.
\end{equation}

Also for any $i$ such that $K_i \geq 4$
$$K_{i+1} \geq K_i/2,$$
and for any $i$ such that $K_i < 4$, we have
$$K_{i+1} = 1,$$
so that for any $0 \leq m<I-j^*$ we have
$$K_i \geq K_{j^*} (i/2)^{m}.$$ 

\paragraph{Inequality~\eqref{eq:first_term_sr}:} We therefore have for $I > i \geq j^*$ and by Equation~\eqref{eq:UPKi} 
\begin{align*}
    q_i^{-1/2} = 2^{-1/2}\exp(\Delta^2 t_i/4) &\geq 2^{-1/2}\exp\left(\bar c \frac{ \Delta^2 T}{2K_{j^*}\log(T) } \right)\;,\\
    &\geq  2^{-1/2} \exp(10)\geq 200 \geq e^2 - 1
\end{align*}

\paragraph{Inequality~\eqref{eq:second_term_sr}:}
We have,
\[q_i = \exp(-\Delta^2 t_i/2)\;,\]
thus by inequality~\eqref{eq:first_term_sr} we have
$$\exp(\Delta^2 t_i/4) \geq \sqrt{2} (e^2 - 1),$$
so that 
$$\Delta^2 t_i/4 \geq \log 2.$$


\end{proof}

\begin{proof}[Proof of Lemma~\ref{lem:xipxipp}] Let $i$ such that $I\geq i>j^*$. Note that on ${\xi_i}''$, we have $M_i>0$ so that $p_i>0$.

\paragraph{First case: $0<p_i \leq 2/5$.} Assume first that $p_i \leq 2/5$. On $\xi_i''$ we have that
$$M_i > p_i K_i (1- 2x),$$
and
$$N_i< 2K_i x,$$
so that 
$$M_i+N_i < p_i K_i + 2K_ix \leq (2/5)K_i + K_i/100 \leq K_i/2.$$
since $2x \leq 1/100$ for $i \geq j^*$ - see Lemma~\ref{lem:tech}. And so all $M_i$ arms of $\{a\in \mathcal A_i \cap \cA^*:\hat \mu_i(a) \geq \mu^* - \bar \Delta/2\}$ are going to be in $\cA_{i+1}$. This implies -- as in this case $K_i \geq 2$ otherwise we cannot have $0<p_i\leq 2/5$ -- that
$$p_{i+1} \geq \frac{M_i}{K_{i+1}} = \frac{M_i}{1\lor \lfloor K_{i}/2\rfloor + \lfloor K_{i}/4\rfloor} \geq \frac{4}{3}(1- 2x) p_i > \frac{5}{4} p_i,$$
as $2x \leq 1/100$.

\paragraph{Second case: $p_i > 2/5$.}
Assume now that $p_i > 2/5$. On $\xi_i''$ we have that
$$M_i > p_i K_i (1- 2x) \geq \frac{198}{500} K_i,$$
and
$$N_i< 2K_i x \leq K_i/100,$$
since $2x \leq 1/100$ for $I\geq i > j^*$ -- see Lemma~\ref{lem:tech}. Since $198/500 + 1/100 = 203/500 < 1/2$ this implies that at least $\frac{199}{500} K_i$ from the arms in $\{a\in \mathcal A_i \cap \cA^*:\hat \mu_i(a) \geq \mu^* - \bar \Delta/2\}$ are going to be in $\cA_{i+1}$. So that
$$p_{i+1} \geq \frac{M_i}{K_{i+1}} = \frac{M_i}{1\lor \lfloor K_{i}/2\rfloor + \lfloor K_{i}/4\rfloor} \geq \frac{4}{3} \times \frac{198}{500} = \frac{66}{125}>1/2.$$
This concludes the proof.
\end{proof}

\subsection{Lower Bound}
\begin{proof}[Proof of Theorem~\ref{thm:LBsimple}] We consider a similar setting to that in the proof of Theorem~\ref{thm:unkownpcumulative} although with a slightly different construction of $\Nu_0, \Nu_1$.

Consider the following two reservoir distributions:
\begin{itemize}
    \item The reservoir distribution $\Nu_0$ characterised by $p_1 = \pstar$ and $p_2 = 1 - \pstar$ and $\nu_1 = \mathcal B(1/2)$ and $\nu_2 = \mathcal B(1/2 - \Delta)$.
    \item The reservoir distribution $\Nu_1$ characterised by $p_1 = \pstar$ and $p_2 = \pstar$ and $p_3 = 1-2\pstar$ and $\nu_1 = \mathcal B(1/2 + \Delta)$ and $\nu_2 = \mathcal B(1/2)$ and $\nu_3 = \mathcal B(1/2 - \Delta)$.
\end{itemize}

We define $\tilde \mu, \tilde \mu'$, and associated expectations and probabilities as in the proof of Theorem \ref{thm:unkownpcumulative}. Consider also any algorithm $\mathfrak A$. We have by similar calculations as Equation~\eqref{eq:klineq2} the following upper bound on the KL divergence
\begin{align}
    \KL(\mathbb E_{\Nu_0} \mathbb P_{\tilde \mu'}^{\mathfrak A}, \mathbb E_{\Nu_1} \mathbb P_{\tilde \mu}^{\mathfrak A}) &= \KL(\mathbb E_{\Nu_1} \mathbb P_{\tilde \mu'}^{\mathfrak A}, \mathbb E_{\Nu_1} \mathbb P_{\tilde \mu}^{\mathfrak A}) \nonumber\\
    &\leq \mathbb E_{\Nu_1} \Bigg[\KL(\mathbb P_{\tilde \mu'}^{\mathfrak A}, \mathbb P_{\tilde \mu}^{\mathfrak A})\Bigg] = \mathbb E_{\Nu_1} \Bigg[\sum_{j \leq T} \mathbb E_{\tilde \mu'}^{ \mathfrak A}[T_j] \kl(\tilde \mu_j', \tilde \mu_j)\Bigg] \nonumber\\
    &\leq  \mathbb E_{\Nu_1} \Bigg[\sum_{j \leq T} \mathbb E_{\tilde \mu'}^{\mathfrak A}[T_j] \frac{\Delta^2}{16} \mathbf 1\{\tilde \mu_j = 1/2+\Delta\} \Bigg] \nonumber\\
    &=  \mathbb E_{\Nu_0} \Bigg[\sum_{j \leq T} \mathbb E_{\tilde \mu'}^{\mathfrak A}[T_j] \frac{\Delta^2}{16} \mathbf 1\{\tilde \mu_j' = 1/2-\Delta\} \frac{\pstar}{1 - \pstar} \Bigg],\label{eq:KLineq}
\end{align}
since by definition of $\Nu_0, \Nu_1$, conditionally on $\tilde \mu_j' = 1/2-\Delta$, the probability that $\tilde \mu_j = 1/2+\Delta$ is $\frac{\pstar}{1 - \pstar}$, and otherwise it is $0$.

By Equation~\eqref{eq:KLineq} and since $\sum_{j \leq T} \mathbb E_{\tilde \mu'}^{\mathfrak A}[T_j] = T$, we have
\begin{align*}
    \KL(\mathbb E_{\Nu_0} \mathbb P_{\tilde \mu'}^{\mathfrak A}, \mathbb E_{\Nu_1} \mathbb P_{\tilde \mu}^{\mathfrak A})  \leq  T \frac{\Delta^2}{16} \frac{\pstar}{1 - \pstar}.
\end{align*}

Now by Bretagnolle-Huber's inequality (see Theorem 14.2 by \citet{lattimore2020bandit}) and for any event $E$
\begin{equation}\label{eq:BH}
\mathbb E_{\Nu_1} \mathbb P_{\tilde \mu}^{\mathfrak A}(E) + \mathbb E_{\Nu_0} \mathbb P_{\tilde \mu'}^{\mathfrak A}(E^C) \geq \frac{1}{2} \exp\Bigg(-\KL(\mathbb E_{\Nu_0} \mathbb P_{\tilde \mu'}^{\mathfrak A}, \mathbb E_{\Nu_1} \mathbb P_{\tilde \mu}^{\mathfrak A})\Bigg).
\end{equation}
Let us write $\ha_T$ for the arm that the algorithm $\mathfrak A$ recommends. Set 
$$E = \{\tilde \mu_{\ha_T} = 1/2\}.$$
Note that on $E$, we make a mistake in prediction for $\tilde \mu$, and that on $E^C$, we make a mistake in prediction for $\tilde \mu'$. We have
$$ \mathbb E_{\Nu_1} \mathbb P_{\tilde \mu}^{\mathfrak A}(E) + \mathbb E_{\Nu_1} \mathbb P_{\tilde \mu'}^{\mathfrak A}(E^C) \geq \frac{1}{2} \exp\Bigg(-T \frac{\Delta^2}{16} \frac{\pstar}{1 - \pstar}\Bigg).$$
This concludes the proof by definition of $E$.
\end{proof}
\section{Technical lemmas}
\label{app:technical_lemmas}

\begin{lemma}(Chernoff bound)
\label{lem:chernoff}
Let $X_1,\ldots,\X_n \sim \Ber(p)$ be n samples from a Bernoulli distribution and $S_n=\sum_{k=1}^n X_n $ their sum. Then for all $\gamma\in[0,1]$ it holds 
\begin{align*}
    \P\left(\frac{S_n}{n} \leq (1-\gamma) p \right) \leq e^{-\frac{\gamma^2 }{4} n p}\,,\\
    \P\left(\frac{S_n}{n} \geq (1+\gamma) p \right) \leq e^{-\frac{\gamma^2 }{4} n p}\,.
\end{align*}

\end{lemma}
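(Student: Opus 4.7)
The plan is to apply the standard Chernoff method based on the moment generating function. By independence of the $X_i$'s, for any $\lambda\in\mathbb R$ one has $\E[e^{\lambda S_n}] = (1-p+pe^\lambda)^n$, and the elementary inequality $1+x \leq e^x$ applied with $x = p(e^\lambda-1)$ yields the convenient upper bound $\E[e^{\lambda S_n}] \leq \exp\!\big(np(e^\lambda-1)\big)$. From this single MGF estimate both tails will follow by choosing the sign of $\lambda$ appropriately and optimizing.

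For the lower tail, fix $\lambda < 0$ and apply Markov's inequality to $e^{\lambda S_n}$:
\[
\P\!\big(S_n \leq (1-\gamma)np\big) \leq e^{-\lambda(1-\gamma)np}\, \E[e^{\lambda S_n}] \leq \exp\!\Big(np\big(e^\lambda - 1 - \lambda(1-\gamma)\big)\Big).
\]
Optimizing at $\lambda^* = \log(1-\gamma) < 0$ gives $\P(S_n \leq (1-\gamma)np) \leq \exp(-np\,\psi_-(\gamma))$ with $\psi_-(\gamma) := (1-\gamma)\log(1-\gamma) + \gamma$. It then remains to show $\psi_-(\gamma) \geq \gamma^2/4$ on $[0,1]$, which follows from the Taylor expansion $\psi_-(\gamma) = \sum_{k\geq 2} \gamma^k/\big(k(k-1)\big) \geq \gamma^2/2$.

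For the upper tail the argument is symmetric: take $\lambda > 0$, apply Markov to $e^{\lambda S_n}$, and optimize at $\lambda^* = \log(1+\gamma)$ to obtain $\P(S_n \geq (1+\gamma)np) \leq \exp(-np\,\psi_+(\gamma))$ with $\psi_+(\gamma) := (1+\gamma)\log(1+\gamma) - \gamma$. The main (and only mildly nontrivial) step is then the scalar inequality $\psi_+(\gamma) \geq \gamma^2/4$ for $\gamma \in [0,1]$: one checks $\psi_+(0) = \psi_+'(0) = 0$ and $\psi_+''(\gamma) = 1/(1+\gamma) \geq 1/2$ on $[0,1]$, so integrating twice yields $\psi_+(\gamma) \geq \gamma^2/4$, concluding the proof.
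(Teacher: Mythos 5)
Your proof is correct, but it takes a genuinely different route from the paper's. You use the classical moment-generating-function version of the Chernoff method: Poissonize the MGF via $1+x\le e^x$ to get $\E[e^{\lambda S_n}]\le \exp(np(e^\lambda-1))$, optimize $\lambda$, and then bound the resulting rate functions $\psi_\pm(\gamma)$ below by $\gamma^2/4$ with elementary calculus (Taylor series for the lower tail, a second-derivative bound for the upper tail). The paper instead invokes the Chernoff--Cram\'er bound in its exact Bernoulli form, $\P(S_n/n\le(1-\gamma)p)\le e^{-n\,\kl((1-\gamma)p,\,p)}$, and then lower-bounds the KL divergence using the ``refined Pinsker'' inequality $\kl(y,x)\ge (x-y)^2/(2\max_{x\le q\le y}q(1-q))$ cited from Garivier et al. Your argument is more self-contained (only Markov's inequality and calculus, no external inequality), and its intermediate exponent $np\,\psi_\pm(\gamma)$ is the standard multiplicative Chernoff rate, which is slightly weaker than the exact KL exponent the paper starts from but still comfortably dominates $\gamma^2np/4$; both routes in fact yield the stronger constant $\gamma^2/2$ on the lower tail. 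One cosmetic point: at $\gamma=1$ your optimizer $\lambda^*=\log(1-\gamma)$ degenerates to $-\infty$, so you should either treat that endpoint by a limiting argument or note directly that $\P(S_n\le 0)=(1-p)^n\le e^{-np}$; this is a boundary case, not a gap in the argument.
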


\begin{proof}
We prove the first inequality; the second one is similar. If $(1-\gamma)p<0$ or $\gamma = 0$ the inequality is trivially true. Else, because of Chernoff's inequality, we have 
\begin{align*}
    \P\left(\frac{S_n}{n} \leq (1-\gamma) p \right) \leq e^{-n\kl\big((1-\gamma) p,p\big)}\,.
\end{align*}
It remains to remark to conclude that
\begin{align*}
    \kl\big((1-\gamma) p,p\big) \geq  \frac{\gamma^2}{2}p\,,
\end{align*}
where we used the refined Pinsker inequality from \citet{garivier2019explore}, for $0\leq x < y \leq 1$,
\[
\kl(y,x) \geq \frac{1}{2\max_{x\leq q\leq y} q (1-q) } (x-y)^2\geq \frac{1}{2y}(x-y)^2\,. 
\]
For the second inequality we use 
\[
 \kl\big((1+\gamma) p,p\big) \geq  \frac{1}{2(1+\gamma) p} \gamma^2 p^2 \geq \frac{\gamma^2}{4} p\,. 
\]
\end{proof}

\begin{lemma}
\label{lem:inversion_log}
Let $A,B,C\geq 0$ be constants such that $A\geq C$, then for $n_0 = \inf\{ n\geq 1:\ A+B\log(n) \leq n C\}$ we have 
\[
n \leq \frac{A+B\log\big((2(B^2+AC)/C^2\big)}{C}+1\,.
\]
\end{lemma}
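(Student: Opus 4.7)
The plan is to exhibit an explicit real number $N^*$ bounded above by the claimed bound that already lies in the feasibility set $\{n \geq 1 : A+B\log n \leq nC\}$, which will immediately yield the stated upper bound on $n_0$. Setting $M := 2(B^2+AC)/C^2$ and $N^* := (A+B\log M)/C$, the choice is engineered so that $N^* C = A + B\log M$ by construction. This reduces the desired inequality $A + B\log N^* \leq N^* C$ to $N^* \leq M$ when $B>0$ (the case $B=0$ being immediate from $A\geq C$), and multiplying through by $C$ shows that it suffices to prove
\[
\log\!\bigl(2(B^2+AC)/C^2\bigr) \leq \frac{2B}{C} + \frac{A}{B}.
\]
I would also check upfront that $N^*\geq 1$: since $A\geq C>0$ one has $M \geq 2A/C \geq 2$, so $\log M \geq \log 2 > 0$ and $N^* \geq A/C \geq 1$, as required.

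For the displayed inequality I would split into the two natural regimes determined by which term of $B^2+AC$ dominates. If $B^2 \geq AC$ then $M \leq 4B^2/C^2$, so $\log M \leq 2\log(2B/C)$, and the elementary fact $\log(2x) \leq x$ for $x>0$ (equivalent to $e^x \geq 2x$, verified by inspecting the minimum of $e^x-2x$ at $x=\log 2$) applied with $x=B/C$ already gives $\log M \leq 2B/C$, which absorbs the target since $A/B \geq 0$. In the opposite regime $AC \geq B^2$ one has $M \leq 4A/C$, hence $\log M \leq \log 4 + \log(A/B) + \log(B/C)$; applying $\log x \leq x-1$ to each of the last two logarithms yields $\log M \leq \log 4 - 2 + A/B + B/C$, and since $\log 4 < 2$ and $B/C \geq 0$ this upper bound is at most $2B/C + A/B$, as desired.

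The main obstacle is precisely this case split: a single monolithic bound such as $\log x \leq x-1$ is too loose when $B \gg C$, whereas AM-GM estimates on $2B/C + A/B$ are too loose when $A$ is large, so one must use $\log(2x)\leq x$ and $\log x \leq x-1$ each in the regime where it is tight. Once this is established, the $+1$ in the claimed bound is unused slack, since the argument actually delivers the stronger estimate $n_0 \leq N^*$.
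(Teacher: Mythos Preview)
Your argument is correct and yields the stated bound, but by a different route than the paper. The paper first locates the real root $x_0$ of $A+B\log x = Cx$ (using $A\geq C$ for existence), applies the single inequality $\log x \leq \sqrt{x}$ to obtain $A+B\sqrt{x_0}\leq Cx_0$, reads off $\sqrt{x_0}\leq (B+\sqrt{B^2+4AC})/(2C)$ from the quadratic formula, and then combines $\sqrt{a+b}\leq\sqrt a+\sqrt b$ with $(a+b)^2\leq 2(a^2+b^2)$ to conclude $x_0 \leq M$; one more substitution into $x_0 = (A+B\log x_0)/C$ gives $x_0 \leq N^*$ and hence $n_0\leq x_0+1\leq N^*+1$. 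You instead verify directly that $N^*$ is feasible by reducing to $\log M \leq 2B/C + A/B$ and handling that via the dichotomy $B^2 \gtrless AC$, using $\log(2x)\leq x$ in one case and $\log x \leq x-1$ in the other. The paper's route is a bit slicker---one inequality, one quadratic---while yours avoids the square-root trick entirely at the cost of a case split; both are elementary and of comparable length.

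One small correction to your closing remark: the $+1$ is \emph{not} unused slack. In the paper's usage (and as its own proof makes explicit via $x_0\leq n_0\leq x_0+1$), $n_0$ is the smallest \emph{integer} in the feasibility set. Your $N^*$ is a real number, so showing $N^*$ is feasible only gives $n_0\leq \lceil N^*\rceil \leq N^*+1$ (after noting that the real feasibility set is an upper ray $[x_0,\infty)$ when $A>C$, so that $\lceil N^*\rceil$ is also feasible). Concretely, with $A=2$, $C=1$ and $B>0$ small one has $N^*\approx 2$ but $n_0=3$. Your argument therefore delivers exactly the bound claimed in the lemma, not a stronger one.
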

\begin{proof}
First let $x_0\geq 1$ be such that $A+B\log(x_0) = C x_0$. It exists since $A+B\log(x)/x \to 0 $ if $x\to \infty$ and since $A\geq C$. In particular, because of the definition of $n_0$ we have $x_0\leq n_0 \leq x_0 +1$. Then note that $A+B\sqrt{x_0} \leq C x_0$. Thus $\sqrt{x_0}$ is smaller than the largest roots of the polynomial $Cy^2-By-A$. Using $\sqrt{a+b}\leq \sqrt{a} +\sqrt{b} $ and $(a+b)^2\leq 2(a^2+b^2)$ we obtain \begin{align*}
    x_0 &\leq \left(\frac{B+ \sqrt{B^2+4AC}}{2C}\right)^2\\
    &\leq 2 \frac{B^2+AC}{C^2}\,.
\end{align*}
Inserting the previous inequality in the definition of $x_0$ and using $n_0\leq x_0+1$ allows us to conclude
\[
n_0 \leq \frac{A + B \log\big(2(B^2+AC) /C^2\big)}{C}+1\,.
\]
\end{proof}
\section{Experiments}
\label{app:experiments}

In this section we conduct preliminary experiments for the cumulative regret and best-arm identification setting.

\paragraph{Cumulative regret} For the cumulative regret we compare \OurAlgorithm (with $\gamma = 0.5$) with the QRM1 algorithm by  \citep{chaudhuri2018quantile} and SR algorithm by \citep{zhu2020regret}. We arbitrarily\footnote{Which is not very important, since we evaluate the algorithms from a problem-dependent point of view} choose the following reservoir: the arms are distributed according to a Bernoulli distribution with possible means $[0.5,\, 0.8]$ sampled with probabilities $[0.8,\, 0.2]$. We remark that the SR algorithm and \OurAlgorithm are very similar, they both sample approximately $\log(T)/\pstar$ arms and run a regret minimizer algorithm on this set of arms. The only difference is that the SR algorithm relies on the MOSS algorithm. Whereas the QRM1 algorithm proceeds by progressively adding new arms. In particular this algorithm is anytime. In Figure~\ref{fig:regret} we compare the cumulative regret of the different algorithms for a fixed horizon $T=20000$. We observe that \OurAlgorithm behaves similarly to SR and that QRM1 performs slightly worst (maybe because of the adaptation to $T$). We also check that all algorithms have a regret that is logarithmic with the horizon as expected. To this aim, in Figure~\ref{fig:regret_T}, we plot the cumulative regret (for the same reservoir) for all horizons $T\in\{100,200,\ldots,10000\}$.

\begin{figure}[!t]
\centering

\includegraphics[width=0.8\textwidth]{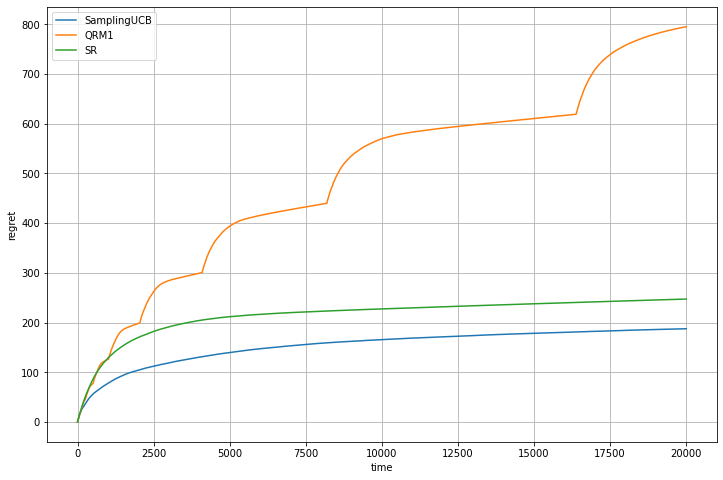}
\caption{Cumulative regret in function of the time estimated by $100$ Monte-Carlo simulations.}
\label{fig:regret}
\end{figure}

\begin{figure}[!t]
\centering

\includegraphics[width=0.8\textwidth]{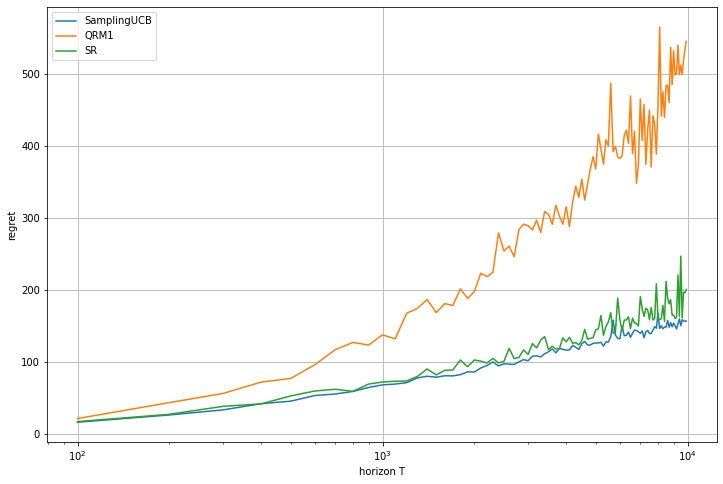}
\caption{Cumulative regret in function of the horizon $T\in\{100,200,\ldots,10000\}$ estimated by $100$ Monte-Carlo simulations.}
\label{fig:regret_T}
\end{figure}

\paragraph{Best-arm identification} For best arm identification we compare our algorithm with the BUCB algorithm by \citep{samuels2020complexity}. In Figure~\ref{fig:bai} we compare the performance of the algorithms across varying $\Delta$ for a fixed $T=1000$. That is, we consider reservoirs of the form $[0.2,\Delta,1]$ for $\Delta \in (0.01 \times i)_{i\in[79]}$ with probabilities $[0.29,0.69,0.02]$. The BUCB algorithm presents an issue as it is designed for the fixed confidence regime the algorithm takes $\delta$ as a parameter. We set $\delta$ equal to an arbitrarily low constant. The BUCB algorithm works by opening successively large brackets of arms, however as they do not provide results in high probability, only in expectation, they can draw significantly less arms from the reservoir. The performance of \OurAlgorithmSimple seems favourable compared to BUCB, however, one may be able to improve the performance of BUCB with parameter tuning. 

\begin{figure}[!t]
\centering

\includegraphics[width=0.8\textwidth]{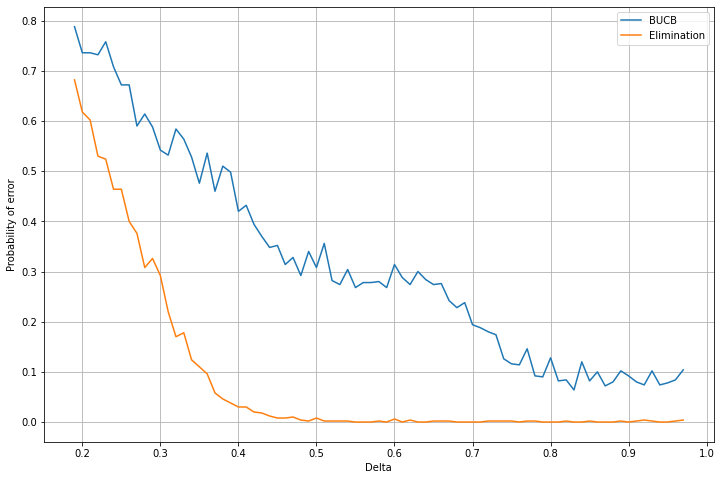}
\caption{Probability of error for best arm identification across varying $
\Delta$ using  $500$ Monte-Carlo simulations.}
\label{fig:bai}
\end{figure}

\end{document}